\theoremstyle{plain}
\newtheorem{theorem}{Theorem}[section]
\newtheorem{lemma}[theorem]{Lemma}
\theoremstyle{definition}
\theoremstyle{remark}
\newtheorem{remark}[theorem]{Remark}
\icmltitlerunning{Variance-Covariance Regularization Enforces Pairwise Independence in Self-Supervised Representations}
\begin{document}

\twocolumn[
\icmltitle{Variance-Covariance Regularization Enforces Pairwise \\
           Independence in Self-Supervised Representations}

\icmlsetsymbol{equal}{*}

\begin{icmlauthorlist}
\icmlauthor{Gr\'egoire Mialon}{yyy}
\icmlauthor{Randall Balestriero}{yyy}
\icmlauthor{Yann LeCun}{yyy,comp,sch}
\end{icmlauthorlist}

\icmlaffiliation{yyy}{Meta}
\icmlaffiliation{comp}{Courant Institute, New York University}
\icmlaffiliation{sch}{Center for Data Science, New York University}

\icmlcorrespondingauthor{Gr\'egoire Mialon}{gmialon@meta.com}

\icmlkeywords{Machine Learning, ICML}

\vskip 0.3in
]

\printAffiliationsAndNotice{\icmlEqualContribution} %

\begin{abstract}
Self-Supervised Learning (SSL) methods such as VICReg, Barlow Twins or W-MSE avoid collapse of their joint embedding architectures by constraining or regularizing the covariance matrix of their projector's output. This study highlights important properties of such strategy, which we coin Variance-Covariance regularization (VCReg).
More precisely, we show that {\em VCReg combined to a MLP projector enforces pairwise independence between the features of the learned representation}. This result emerges by bridging VCReg applied on the projector's output to kernel independence criteria applied on the projector's input.
We empirically validate our findings where %
(i) we put in evidence which projector's characteristics favor pairwise independence, %
(ii) we demonstrate pairwise independence to be beneficial for out-of-domain generalization, (iii) we demonstrate that the scope of VCReg goes beyond SSL by using it to solve Independent Component Analysis.
This provides the first theoretical motivation and explanation of MLP projectors in SSL.

\end{abstract}

\begin{figure}[ht]
\begin{center}
\centerline{\includegraphics[width=.95\columnwidth]{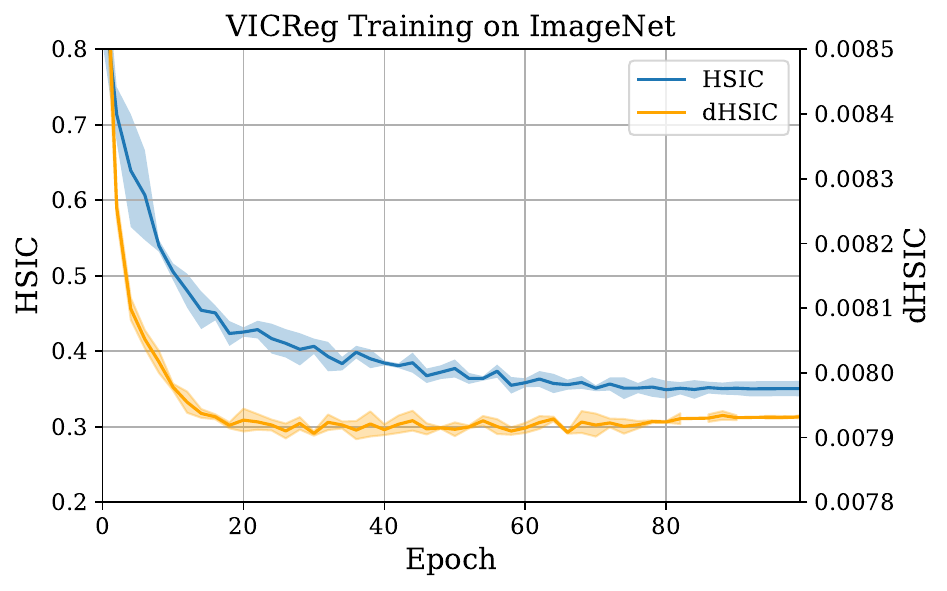}}
\caption{Pairwise independence (HSIC) of the features in the learned representation increases during training while mutual independence (dHSIC) stagnates: VCReg of the projector implicitly optimizes the former but not the latter. Averaged over three runs.}
\label{fig:indep_training}
\end{center}
\vspace{-1cm}
\end{figure}

\section{Introduction}

Self-Supervised Learning (SSL) via joint embedding architectures has risen to learn visual representations outperforming their supervised counterpart. This paradigm enforces similar embeddings for two augmented versions of the same sample, thus allowing an encoder $f$ to learn a representation for a given modality without labels. %
Importantly, $f$ could solve the learning task by predicting the same embedding for every input, a failure mode known as collapse. To avoid this, various mechanisms have been proposed hence the diversity of SSL methods~(\textit{e.g.}, \citet{grill2020bootstrap, caron2020unsupervised, caron2021emerging, chen2021exploring}). Most of these are a composition $g \circ f$ of the encoder with a projector neural network $g$. Only $f$ is retained after training, in opposition to supervised training that never introduces $g$. The projector was proposed by~\citet{chen2020simple} and significantly improved the quality of the learned representation in terms of test accuracy on ImageNet and other downstream tasks. Although some works~\citep{appalaraju2020towards,bordes2022guillotine} provide empirical knowledge on the projector, none provide a theoretical analysis of MLP projectors in practical SSL \citep{jing2021understanding,huang2021towards,wang2020understanding,haochen2021provable,tian2020makes, wang2021understanding,cosentino2022toward}.

This study sheds a new light on the role of the projector via the lens of Variance-Covariance Regularization (VCReg), a strategy introduced in recent SSL methods~\citep{bardes2021vicreg, zbontar2021barlow, ermolov2021whitening} to cope with collapse by constraining or regularizing the covariance or cross-correlation of the projector $g$ output to be identity. More precisely, we demonstrate that \textit{VC regularization of the projector's output enforces pairwise independence between %
the components of the projector's input i.e. the encoder's output}, and connects this property to projector's characteristics such as width and depth. %
This provides the first theoretical motivation and explanation of MLP projector in SSL:
fully or partially pairwise independent representations are generally sought for, \textit{e.g} to disentangle factors of variation~\citep{trauble2021disentangled}. In particular,~\citet{li2019learning} demonstrates that factors in real-world data tend to be pairwise independent.
Our experimental analysis suggests that different levels of pairwise independence of the features in the representation emerge from a variety of SSL criteria along with mutual independence. However, as opposed to other frameworks, VCReg allows for theoretical study and explicit control of the learned independence amount. We prove and experimentally validate this property for random projectors, study how it emerges in learned projectors, and demonstrate pairwise independence to be beneficial for out-of-domain generalization. %
We then ablate the SSL context and lean on our findings to show that VCReg of a SSL projector solves Independent Component Analysis (ICA). %
Beyond providing a novel theoretical understanding of the projector, we believe that this work also leads to a better understanding of VICReg. The scope of VCReg is not limited to SSL: our experiments on ICA open the way to other applications where some degree of independence is needed.

\section{Background}

\subsection{Measuring Statistical Independence Using Kernel Methods}
\label{sec:background1}

Measuring the independence between two sets of realizations $\{\mX_1^1,\dots,\mX_1^N\},\{\mX_2^1,\dots,\mX_2^N\}, \mX_1^i \in \mathbb{R}^{M},\mX_2^i \in \mathbb{R}^{M}$ is a fundamental task that has a long history in statistics e.g. through the Mutual Information (MI) of the two random variables $X_1$ and $X_2$ from which those two sets are independently drawn from \citep{cover1999elements}. Those variables are said independent if the realization of one does not affect the probability distribution of the other. 
Computing the MI in practice is known to be challenging \citep{goebel2005approximation}, which has led to considerable interest in using alternative criteria e.g. based on functions in Reproducing Kernel Hilbert Spaces (RKHS) \citep{bach2002kernel}, a special case of what is known as functional covariance or correlation \citep{renyi1959measures}. It consists in computing those statistics after nonlinear transformation \citep{leurgans1993canonical} as in
\begin{equation}
\sup_{f_1\in \mathcal{F}_1,f_2\in\mathcal{F}_2} \Corr \langle f_1(X_1), f_2(X_2)\rangle,\label{eq:coco}
\end{equation}
where $f_1,f_2$ are constrained to lie within some designed functional space, and the $\Cov$ can be used instead of the $\Corr$. If \cref{eq:coco} is small enough, then $X_1$ and $X_2$ are independent in regard to the functional spaces $\mathcal{F}_1,\mathcal{F}_2$. For example, if $\mathcal{F}_1$ and $\mathcal{F}_2$ are unit balls in their respective vector spaces, then \cref{eq:coco} is just the norm of the usual correlation/covariance operator \citep{mourier1953elements} which would be enough for independence  under joint Gaussian distributions \citep{melnick1982misspecifications}.

\paragraph{HSIC and pairwise independence.}
More recently, \citet{gretton2005measuring} introduced a pairwise independence criterion known as the Hilbert-Schmidt Independence Criterion (HSIC) between the two random variables $X_1$ and $X_2$ which can be estimated given empirical samples $\mX_1$ and $\mX_2 \in \mathbb{R}^{N \times M}$ via
\begin{equation}
    \HSIC(\mX_1,\mX_2) := \frac{1}{(N-1)^2}\Tr(\mK_1\mH\mK_2\mH),\label{eq:HSIC}
\end{equation}
with $\mH$ the centering matrix $\mI-\mathbf{1}\mathbf{1}^T\frac{1}{N}$, $(\mK_1)_{i,j}=k_1(\mX_1^i,\mX_1^j)$ and $(\mK_2)_{i,j}=k_2(\mX_2^i,\mX_2^j)$ the two kernel matrices of $\mX_1$ and $\mX_2$ respectively, and $k_1$, $k_2$ of $\mathcal{F}_1,\mathcal{F}_2$ universal kernels such as the Gaussian kernel (see \citet{steinwart2001influence,micchelli2006universal} for other examples).
Crucially, since 
\begin{equation}
    \HSIC(X_1,X_2)\geq \sup_{f_1\in \mathcal{F}_1,f_2\in\mathcal{F}_2} \Cov \langle f_1(X_1), g(X_2)\rangle,
\end{equation}
HSIC can be used to test for (pairwise) independence as formalized below. 
\begin{theorem}[Thm. 4 from \citep{gretton2005measuring}]
$\HSIC(X_1,X_2)=0$ if and only if $X_1$ and $X_2$ are independent. %
\end{theorem}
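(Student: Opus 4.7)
The plan is to identify $\HSIC(X_1, X_2)$ with the squared Hilbert--Schmidt norm of the cross-covariance operator between the two RKHSs, and then use the universality of the kernels $k_1, k_2$ to bootstrap vanishing of that operator into full measure-theoretic independence. Concretely, following \citet{gretton2005measuring}, I would introduce the cross-covariance operator $C_{X_1 X_2} : \mathcal{F}_2 \to \mathcal{F}_1$ uniquely defined (for centered feature maps) by
\begin{equation}
    \langle f_1, C_{X_1 X_2} f_2 \rangle_{\mathcal{F}_1} = \Cov(f_1(X_1), f_2(X_2)), \quad \forall f_1 \in \mathcal{F}_1,\ f_2 \in \mathcal{F}_2,
\end{equation}
and recall the known identity $\HSIC(X_1, X_2) = \| C_{X_1 X_2} \|_{\mathrm{HS}}^2$, which connects the trace expression in \cref{eq:HSIC} to functional covariances via the reproducing property $f(x) = \langle f, k(\cdot, x) \rangle$. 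This reduces the theorem to showing that $C_{X_1 X_2} = 0 \iff X_1 \perp X_2$.

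The $(\Leftarrow)$ direction is immediate: independence gives $E[f_1(X_1) f_2(X_2)] = E[f_1(X_1)] E[f_2(X_2)]$ for every $f_1 \in \mathcal{F}_1, f_2 \in \mathcal{F}_2$, hence every entry of $C_{X_1 X_2}$ vanishes and so does its Hilbert--Schmidt norm.

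The $(\Rightarrow)$ direction is the real work, and is where the universality hypothesis on $k_1, k_2$ is essential. From $\|C_{X_1 X_2}\|_{\mathrm{HS}} = 0$ I get $\Cov(f_1(X_1), f_2(X_2)) = 0$ for \emph{all} $f_1 \in \mathcal{F}_1, f_2 \in \mathcal{F}_2$. The plan is to lift this to arbitrary bounded continuous functions: by universality, each $\mathcal{F}_i$ is dense in $C_b$ (or $C_0$ on a compact domain) under $\|\cdot\|_\infty$, so for any $\varphi_1, \varphi_2 \in C_b$ I can pick RKHS sequences $f_1^{(n)} \to \varphi_1, f_2^{(n)} \to \varphi_2$ uniformly and pass to the limit inside the covariance using boundedness and dominated convergence. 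This yields $\Cov(\varphi_1(X_1), \varphi_2(X_2)) = 0$ for all bounded continuous $\varphi_1, \varphi_2$. Finally, I would approximate indicator functions of arbitrary Borel sets $A, B$ by bounded continuous functions (Urysohn/Lusin-type argument on the respective Polish spaces) and conclude $P(X_1 \in A, X_2 \in B) = P(X_1 \in A)P(X_2 \in B)$, which is the measure-theoretic definition of independence.

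The main obstacle is exactly this last step, namely turning RKHS-level decorrelation into independence of the underlying random variables. All of the technical content lives in (i) correctly invoking universality to get density in $C_b$ and (ii) the approximation argument from $C_b$ to Borel indicators, both of which require the domains of $X_1, X_2$ to be well-behaved (typically assumed compact metric, or more generally Polish with a characteristic kernel). Once these measure-theoretic approximations are justified, the biconditional follows cleanly from the operator-norm identification of $\HSIC$.
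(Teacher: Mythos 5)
The paper does not actually prove this statement: it is imported verbatim as Theorem~4 of \citet{gretton2005measuring}, so there is no in-paper proof to compare against. Your sketch reproduces the standard argument from that reference --- identify the population $\HSIC$ with $\|C_{X_1X_2}\|_{\mathrm{HS}}^2$, observe that independence annihilates the cross-covariance operator, and in the converse direction use universality to upgrade vanishing covariances over the RKHS balls to vanishing covariances over all of $C(\mathcal{X})$ and thence, via approximation of Borel indicators and a monotone-class step, to factorization of the joint law --- and all the steps are sound. Your closing caveat is the one point worth keeping explicit: universality in the sense used by \citet{gretton2005measuring} is density in $C(\mathcal{X})$ for \emph{compact} metric $\mathcal{X}$, a hypothesis the paper's restatement of the theorem silently drops, and without compactness (or a characteristic-kernel assumption) the ``if'' direction does not go through.
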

\cite{gretton2005measuring} also provide a statistical test for pairwise independence based on HSIC. Further quantities such as upper bounds on the MI can be found in a similar way, e.g. see Thm.~16 in \cite{gretton2005kernel}. In our experiments, we rely on HSIC under the Gaussian kernel scaled by the median of the distribution of pairwise euclidean distances between samples. To compare HSIC across different models, we further normalize following~\citet{kornblith2019similarity}.

\paragraph{dHSIC and mutual independence.}
Mutual independence of a set of $D$ $M$-dimensional random variables $X_1, \dots, X_D$ is a stronger property than independence between all pairs of random variables in the set. To evaluate it, \citet{pfister2018kernel} introduce dHSIC, a multivariate extension of HSIC. In short, dHSIC$(X_1,\dots, X_D)$ measures the distance between mean embeddings $\mu$ under a RKHS $\mathcal{F}$~\citep{smola2007hilbert} of the product of distributions and the joint distribution
\begin{equation}
    \|\mu(\mathbb{P}^{X_1} \otimes \dots \otimes \mathbb{P}^{X_D}) - \mu(\mathbb{P}^{X_1, \dots, X_D}) \|_{\mathcal{F}}.
\label{eq:dHSIC_kernel}
\end{equation}
Similarly to HSIC, \citet{pfister2018kernel} establish the equivalence between %
$\text{dHSIC} = 0$
and mutual independence along with a statistical test. We provide an estimator of dHSIC given empirical samples $\mX_1, \dots, \mX_D$ of the above random variables in $\mathbb{R}^{N \times M}$ each as well as implementations of HSIC and dHSIC in Appendix~\ref{app:implem}.

\paragraph{Complexities.} In what follows, we consider the $D$ features of a batch of representations $\mX \in \mathbb{R}^{N \times D}$ as $D$ scalar random variables ($M=1$) with $N$ samples each. The subsequent complexities of HSIC for one pair of features and dHSIC are $\mathcal{O}(N^2)$ and $\mathcal{O}(DN^2)$ respectively. Testing pairwise independence between all variables in a $D$-set with HSIC is $\mathcal{O}(D^2N^2)$ while dHSIC requires $N \geq 2D$~\citep{pfister2018kernel}. Since competitive visual representations typically have $D=2048$, we resort to surrogates to estimate independence of the representations in practice. These surrogates are detailed in Section~\ref{sec:exp}.

\subsection{Variance-Covariance Regularization in Self-Supervised Learning}

\textbf{SSL with joint embeddings} %
learns visual representations by producing two different augmented views of an input batch of images $\mS \in \mathbb{R}^{N \times W \times H}$, denoted by $\mS_{\rm left}$ and $\mS_{\rm right}$. Each view is fed to an encoder $f$, typically a ResNet50~\citep{he2016deep}, producing representations $\mX_{\rm left}$ and $\mX_{\rm right} \in \mathbb{R}^{N \times D}$ which are passed through a projector $g$ to output embeddings  $\mZ_{\rm left}$ and $\mZ_{\rm right} \in \mathbb{R}^{N \times P}$. Finally, an invariance term encouraging $\mZ_{\rm left}$ and $\mZ_{\rm right}$ to be similar is applied. While most SSL methods require architectural or training strategies to avoid collapse~\citep{grill2020bootstrap, he2021masked}, \citet{bardes2021vicreg} and \citet{zbontar2021barlow} only require to modify the loss. After training, only $f$ is retained to be used in downstream tasks. We will denote the $(2N,P)$ matrix $\mZ_{\rm total} \triangleq [\mZ_{\rm left}^T,\mZ_{\rm right}^T]^T$. More background on self-supervised learning for images in theory and in practice can be found in Appendix~\ref{app:background}. In particular, we detail the crucial role of the projector in many SSL frameworks.

\paragraph{VICReg.} In~\citet{bardes2021vicreg}, an anti-collapse term $\mathcal{L}_{\rm VC}$, which we coin VC regularization (VCReg), is added to an invariance loss to form $\mathcal{L}_{\rm VIC}$:
\begin{align}
\mathcal{L}_{\rm VC}= \sum_{k=1}^{P}\max\left(0,1-\sqrt{\Cov(\mZ_{\rm total})_{k,k}}\right)\hspace{-0.08cm}\nonumber\\ \hspace{-0.1cm} + \alpha \sum_{j=1,j\not = k}^{P}\hspace{-0.15cm}\Cov(\mZ_{\rm total})^2_{k,j}\label{eq:VCReg},\\
\mathcal{L}_{\rm VIC}= \frac{1}{N} \sum_{n=1}^{N}\|(\mZ_{\rm left})_{n,.}-(\mZ_{\rm right})_{n,.}\|_2^2 + \mathcal{L}_{\rm VC} %
\label{eq:VICReg},
\end{align}
The leftmost term in $\mathcal{L}_{\rm VC}$ corresponds to regularizing the variance of each feature in $\mZ_{\rm total}$ to be at least unit, while the second term seeks to minimize the covariance between each pair of features in $\mZ_{\rm total}$. Note that $\mathcal{L}_{\rm VC}$ applies to each view separately. In~\citet{bardes2021vicreg}, the weight of each term in $\mathcal{L}_{\rm VIC}$ can be tuned. Since the authors find that best results are obtained with equal weights for the invariance and the variance terms, we only vary $\alpha$. Note that \citet{bardes2021vicreg, zbontar2021barlow} observe that wider projectors further improve the representation learned by the encoder $f$, yet neither further study this intriguing phenomenon. Although this work focuses on VCReg as formulated in VICReg, we provide background on Barlow Twins and W-MSE in the Appendix~\ref{app:background}, while next section establishes the similarity between VICReg, Barlow Twins and W-MSE as VCReg optimizers.

\section{VC Regularization of SSL Projector's Output Enforces Pairwise Independent Features at the Encoder's Output}
\label{sec:result}

In this section, we demonstrate that for random Multi-Layer Perceptrons (MLPs) projectors, minimizing VC of the projector's output amounts to minimizing HSIC ---a measure of pairwise dependence (see \cref{sec:background1})--- between all pairs of features
in the learned representation, \textit{i.e} the projector's input. The randomness assumption has already been used to model weight evolution during training~\citep{franchi2020tradi, blundell2015weight}. We then justify how this reasoning extends to learned projectors. Our claims are experimentally verified in \cref{sec:exp}. 
\vspace{-.5cm}
\paragraph{Notations.} In our setting, $\mX \in \mathbb{R}^{N \times D}$ is the $D$ dimensional output of the encoder $f$ for a batch of size $N$. $\mX$ is then fed to the projector $g$ to form embeddings $\mZ= g(\mX) = [g\mX_1,\dots,g\mX_P] \in \mathbb{R}^{N \times P}$ on which SSL criteria are generally applied. Note that $\mZ$ would be $\mZ_{\rm left}$ or $\mZ_{\rm right}$ in the previous subsection. The acronym MLP refers to projectors typically used in SSL, a neural network with three layers of same width, unless stated otherwise.

Our proof strategy consists in proving that VCReg of each MLP layer (Linear + Batch Normalization (BN) + ReLU) output enforces pairwise independence of its input, before composing these results. We first study nonlinear elementwise projectors $g:\mathbb{R} \mapsto \mathbb{R}^L$, which belong to the wider class of DeepSets~\citep{zaheer2017deep}, and of which BN followed by ReLU can be seen as an instance. We denote the mapping of such projectors as $\mZ = g(\mX) = [g(\mX_{:,1}), \dots, g(\mX_{:,P})]$.

\begin{lemma}[Nonlinear elementwise projectors minimize HSIC of their input] Let $g:\mathbb{R} \mapsto \mathbb{R}^L$ be a nonlinear elementwise projector; then, minimizing the covariance of $\mZ$ with respect to the encoder $f$ amounts to minimizing HSIC on all feature pairs in $\mX$ under the kernels $\mK_{i} = g(\mX_{:,i})g(\mX_{:,i})^T$.
\label{prop:nonlinar_proj}
\end{lemma}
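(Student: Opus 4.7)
The plan is to exploit the block structure that an elementwise projector $g:\mathbb{R} \to \mathbb{R}^L$ imposes on $\mZ$. Writing $\mZ^{(i)} := g(\mX_{:,i}) \in \mathbb{R}^{N \times L}$, the output $\mZ \in \mathbb{R}^{N \times PL}$ decomposes as $\mZ = [\mZ^{(1)}, \dots, \mZ^{(P)}]$, and $\Cov(\mZ)$ inherits a $P \times P$ block structure whose $(i,j)$ block is the $L \times L$ cross-covariance $\Cov(\mZ^{(i)}, \mZ^{(j)}) = \frac{1}{N-1}(\mZ^{(i)})^T \mH \mZ^{(j)}$. The off-diagonal sum $\sum_{k \neq k'} \Cov(\mZ)_{k,k'}^2$ appearing in $\mathcal{L}_{\rm VC}$ then splits cleanly into an intra-block piece $\sum_i \sum_{l \neq l'} \Cov(\mZ^{(i)})_{l,l'}^2$ depending only on a single input feature at a time, plus the inter-block piece $\sum_{i \neq j} \|\Cov(\mZ^{(i)}, \mZ^{(j)})\|_F^2$ that couples distinct input features.

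Next I would compute the inter-block contribution by a direct algebraic manipulation. Using $\|A\|_F^2 = \Tr(A A^T)$, cyclicity of the trace, and $\mH^T \mH = \mH$, one gets
\begin{align*}
\|\Cov(\mZ^{(i)}, \mZ^{(j)})\|_F^2
&= \tfrac{1}{(N-1)^2}\Tr\bigl((\mZ^{(i)})^T \mH \mZ^{(j)}(\mZ^{(j)})^T \mH \mZ^{(i)}\bigr) \\
&= \tfrac{1}{(N-1)^2}\Tr\bigl(\mZ^{(i)}(\mZ^{(i)})^T \mH \mZ^{(j)}(\mZ^{(j)})^T \mH\bigr) \\
&= \tfrac{1}{(N-1)^2}\Tr(\mK_i \mH \mK_j \mH),
\end{align*}
with $\mK_i = g(\mX_{:,i})\,g(\mX_{:,i})^T$. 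Comparing with \cref{eq:HSIC}, this is exactly $\HSIC(\mX_{:,i}, \mX_{:,j})$ under the kernels named in the lemma. Summing over $i \neq j$ yields the claimed identification of the cross-feature portion of $\mathcal{L}_{\rm VC}$ with $\sum_{i \neq j} \HSIC(\mX_{:,i}, \mX_{:,j})$.

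Finally I would interpret the intra-block residual. Each term $\sum_{l \neq l'} \Cov(\mZ^{(i)})_{l,l'}^2$ depends on $\mX_{:,i}$ alone, so it reflects properties of the nonlinear map $g$ acting on a single marginal and does not couple different features of $\mX$. Any gradient flowing back to the encoder $f$ through this residual only reshapes the marginal distributions of individual components of $\mX$; the pressure that discourages cross-feature statistical dependence in $\mX$ is carried entirely by the inter-block HSIC terms. This is exactly the content of the lemma.

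The main subtlety is not algebraic — the trace identity is immediate — but in making this last step precise, namely arguing that the intra-block residual is irrelevant for the pairwise-independence conclusion. The cleanest phrasing is that the part of $\mathcal{L}_{\rm VC}$ which is sensitive to the joint distribution (as opposed to the marginals) of the features of $\mX$ coincides, up to the $(N-1)^{-2}$ prefactor, with $\sum_{i \neq j} \HSIC(\mX_{:,i}, \mX_{:,j})$ for the stated kernels, so that minimizing it with respect to $f$ is equivalent to minimizing HSIC across all feature pairs in $\mX$.
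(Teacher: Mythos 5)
Your proposal is correct and follows essentially the same route as the paper: the identical kernel choice $\mK_i = g(\mX_{:,i})g(\mX_{:,i})^T$ and the same trace/Frobenius-norm identity linking $\Tr(\mK_i\mH\mK_j\mH)$ to the off-diagonal blocks of $\Cov(\mZ)$, just traversed from the VCReg side rather than from the HSIC side. Your explicit handling of the intra-block residual is a small presentational bonus (the paper implicitly discards those terms via the mask $(1-\mI_D)\otimes\mathbf{1}_L\mathbf{1}_L^T$), but it does not constitute a different argument.
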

\vspace{-1cm}
\begin{proof}
Let us consider the $N$ values of the $i^{\rm th}$ data feature ($\mX_{:,i}$) as realizations of a random variable. Recalling \cref{eq:HSIC}, independence of two random variables can be estimated via HSIC. 
Considering the arbitrarily complicated network $g$ and $\mZ=[g(\mX_{:,1}),\dots,g(\mX_{:,D})] \in \mathbb{R}^{N \times DL}$, we have:
\begin{align}
\HSIC(&\mX_{:,i}, \mX_{:,j}) = \nonumber\\
&\;\;\;\;\frac{\Tr(g(\mX_{:,i})g(\mX_{:,i})^T\mH g(\mX_{:,j})g(\mX_{:,j})^T\mH)}{(N-1)^2}\nonumber\\
&= \frac{1}{(N-1)^2} \left\|g(\mX_{:,i})^T\mH g(\mX_{:,j})\right\|_F^2\nonumber\\
&=\left\|\Cov\left(g(\mX_{:,i}),g(\mX_{:,j}\right)\right\|_F^2\nonumber\\
&=\left\|\Cov\left(\mZ\right)_{1+iL:1+(i+1)L,1+jl:1+(j+1)L}\right\|_F^2\nonumber
\end{align}
Hence, 
\begin{align*}
\sum_{i\not = j}\HSIC(\mX_{:,i}, \mX_{:,j}) 
\hspace{-0.08cm}=\hspace{-0.08cm} 
\left\|\Cov\hspace{-0.08cm}\left(\mZ\right)\hspace{-0.08cm}\odot\hspace{-0.08cm} \left(\hspace{-0.08cm}(1-\mI_{D}) \hspace{-0.08cm}\otimes\hspace{-0.08cm} \mathbf{1}_{L}\mathbf{1}_{L}^T\right)\right\|_F^2,
\end{align*}
and, in the case $L=1$ we have $\mathbf{1}_{L}\mathbf{1}_{L}^T=1$ leading to $\sum_{i\not = j}\HSIC(\mX_{:,i}, \mX_{:,j})=\sum_{i\not = j}\Cov\left(\mZ\right)^2_{i,j}$, concluding the proof.
\end{proof}
To rigorously obtain independence, the $\mK_{i}$'s must be universal kernels. %
In practice, Batch Normalization in the projector can be considered random as it uses the batch statistic. Combining this operation with a ReLU, we obtain random elementwise nonlinearities approaching random features~\citep{rahimi2007random} of a universal kernel~\citep{sun2018approximation}.
Increasing $L$ improves the approximation of such
kernel \citep{chen2017relative} i.e. the larger $L$, the better approximation of HSIC the covariance term in VICReg is.

\begin{remark}[Necessity of variance regularization] Although the variance regularization term does not explicitly appear when minimizing HSIC on all pairs, it is necessary when optimizing $\mX$ to prevent the degenerate solution of $\mX$ being a constant, a common collapse mode of SSL.
\end{remark}

\begin{lemma}[Random linear projectors minimize HSIC of their input] Let $g$ be a random linear projector with weights $\mW$, and $\mX$ has same variance for each column. Then, for large projectors, minimizing the covariance of $\mZ= g(\mX) = \mX\mW$ with respect to the encoder $f$ amounts to minimizing HSIC with a linear kernel for each pair of features %
in $\mX$. (Proof in \cref{app:proofs}.)
\label{prop:linar_proj}
\end{lemma}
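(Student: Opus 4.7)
The plan is to express each off-diagonal covariance entry of $\mZ$ as a bilinear form in the columns of $\mW$, then take expectation over the random weights so as to reduce the objective to a functional of $\Cov(\mX)$. The first step is to observe that by linearity of covariance one has $\Cov(\mZ)_{a,b} = \mW_{:,a}^{T}\Cov(\mX)\mW_{:,b}$ for every $a,b$, which exhibits $\Cov(\mX)$ as the natural kernel through which the random $\mW$ acts.

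Next, I would model the entries of $\mW$ as i.i.d.\ with mean zero and variance $\sigma^{2}$, so that $\mW_{:,a}$ and $\mW_{:,b}$ are independent for $a\neq b$. Expanding $\Cov(\mZ)_{a,b}^{2}$ into a quadruple sum and using $\E[W_{ia}W_{ka}]=\sigma^{2}\delta_{ik}$ (and the analogous identity for column $b$), independence across columns factors the expectation and all cross terms vanish, leaving $\E_{\mW}\!\left[\Cov(\mZ)_{a,b}^{2}\right] = \sigma^{4}\|\Cov(\mX)\|_{F}^{2}$. Summing over the $P(P-1)$ off-diagonal pairs therefore gives an expected VCReg cost equal to $P(P-1)\sigma^{4}\|\Cov(\mX)\|_{F}^{2}$, i.e.\ a constant multiple of $\|\Cov(\mX)\|_{F}^{2}$.

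Then I would split $\|\Cov(\mX)\|_{F}^{2}=\sum_{i}\Var(X_{i})^{2}+\sum_{i\neq j}\Cov(X_{i},X_{j})^{2}$. The equal-variance hypothesis on the columns of $\mX$ renders the diagonal contribution constant in $f$, so the minimization reduces to $\sum_{i\neq j}\Cov(X_{i},X_{j})^{2}$. Specializing the HSIC formula \cref{eq:HSIC} to the linear kernel $k(x,y)=xy$ yields $\HSIC(X_{i},X_{j})=\Cov(X_{i},X_{j})^{2}$ by the same centering and rank-one trace manipulation used inside the proof of \cref{prop:nonlinar_proj} (i.e.\ $\Tr(\mX_{:,i}\mX_{:,i}^{T}\mH\mX_{:,j}\mX_{:,j}^{T}\mH)=(\mX_{:,i}^{T}\mH\mX_{:,j})^{2}$, with $\mX_{:,i}^{T}\mH\mX_{:,j}=(N-1)\Cov(X_{i},X_{j})$), which delivers the claimed equivalence.

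The main obstacle is the moment algebra of $\mW$: the clean identity $\E[W_{ia}W_{jb}W_{ka}W_{lb}]=\sigma^{4}\delta_{ik}\delta_{jl}$ for $a\neq b$ requires independence of entries both within and across columns of $\mW$, without which cross moments would couple the sums and spoil the neat collapse onto $\|\Cov(\mX)\|_{F}^{2}$. Once this and the equal-variance hypothesis on $\mX$ are granted, the remaining steps are routine; relaxing either assumption breaks the reduction, since without equal column variances the diagonal of $\Cov(\mX)$ would remain an active part of the objective and would no longer be captured by a sum of pairwise linear-kernel HSIC values.
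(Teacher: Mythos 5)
Your argument is correct in its mechanics but follows a genuinely different route from the paper's. The paper assumes the weight vectors of $\mW$ are exactly orthonormal (justified asymptotically for wide random layers), works with the kernel induced by the projector, $\mK_i=g((\mX)_{:,i})g((\mX)_{:,i})^T$, and derives the \emph{exact} identity $\sum_{i\neq j}\HSIC((\mX)_{:,i},(\mX)_{:,j})=\sum_{i\neq j}\Cov(\mX\mW)_{i,j}^2$ by pushing the sum inside the Frobenius norm (the cross terms $\Tr(f(i,j)^Tf(k,\ell))$ vanish by orthogonality) and identifying the subtracted diagonal correction with $\Var((\mX)_{:,1})\,\mI$ via the equal-variance hypothesis. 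You instead keep the entries of $\mW$ i.i.d.\ and compute the \emph{expected} VCReg cost: your fourth-moment computation giving $\E_{\mW}\left[\Cov(\mZ)_{a,b}^2\right]=\sigma^4\|\Cov(\mX)\|_F^2$ for $a\neq b$ is correct, and your linear-kernel reduction $\HSIC(X_i,X_j)=\Cov(X_i,X_j)^2$ coincides with what the paper uses. Your route is more elementary and makes the role of the randomness explicit; the trade-off is symmetric, since the paper needs a concentration argument to justify orthogonality of a finite random $\mW$ while you need one to pass from the expectation over $\mW$ to a single draw.

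The one step that does not go through as written is the claim that the equal-variance hypothesis renders the diagonal contribution $\sum_i\Var(X_i)^2$ constant in $f$. Equal variances make this term $D\,v^2$ with $v$ the common variance, but $v$ still depends on the encoder, so minimizing your expected objective also shrinks the variances of $\mX$ --- a collapse pressure that the paper's exact identity does not exhibit, because both sides of that identity exclude diagonal terms from the start. To close the gap you must explicitly invoke the variance term of VCReg (which pins $v$ at $1$), in the same way the paper's remark on the necessity of variance regularization handles the degenerate constant solution in the nonlinear case; with that addition, your reduction to $\sum_{i\neq j}\HSIC$ under the linear kernel delivers the stated equivalence.
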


Since the corresponding kernel in \cref{prop:linar_proj} is linear, only decorrelation can be achieved for such projector's input. This however differs from PCA, as we optimize over $\mX$ and not $g$'s parameters ($\mW$).
Proving \cref{prop:linar_proj} requires a projector with orthogonal weights, i.e. $\mW^T\mW=\mI$, which gets more and more accurate with random weights $\mathcal{U}(-\nicefrac{1}{\sqrt{D}}, \nicefrac{1}{\sqrt{D}})$ as $P$ (the output dimension of $g$) increases. This follows since the central limit theorem states that the dot-product between two $P$-dimensional weight vectors tends to $0$ with rate $\mathcal{O}(\nicefrac{1}{\sqrt{P}})$. Weight initialization in neural nets roughly follows  $\mathcal{U}(-\nicefrac{1}{\sqrt{D}}, \nicefrac{1}{\sqrt{D}})$, which will be used to instantiate random projectors.

\begin{theorem}[MLP projectors with random weights enforce pairwise independence.]
Let us consider a MLP composed of alternating random linear layers and elementwise nonlinearities. Then, for large projectors, minimizing the variance and covariance of the output $\mZ$ enforces pairwise independence between all pairs of features %
in the input $\mX$.
\label{cor:composition}
\end{theorem}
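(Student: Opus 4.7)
The plan is to peel off the MLP's layers one at a time, alternating between \cref{prop:linar_proj} for random linear layers and \cref{prop:nonlinar_proj} for elementwise nonlinear layers, until we reach the input $\mX$. At each step, a statement about minimizing a block-structured covariance at the output of a given layer gets translated into a statement about minimizing HSIC (with a specific kernel) at the input of that layer. Composing these statements across the whole MLP yields a statement about minimizing HSIC on $\mX$ with a compositional kernel implicitly defined by the architecture.

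Concretely, I would start at the output. If the last operation is a random linear layer (as in standard VICReg projectors), then by \cref{prop:linar_proj}, minimizing $\Cov(\mZ)$ is equivalent to minimizing linear-kernel HSIC between all pairs of features of the input $\mY$ of that layer, i.e.\ to killing the off-diagonal entries of $\Cov(\mY)$. If the preceding layer is an elementwise nonlinearity $g \colon \mathbb{R} \to \mathbb{R}^L$ (e.g.\ randomized BN $+$ ReLU), the columns of $\mY$ naturally group into $L$-sized blocks, and by \cref{prop:nonlinar_proj} the vanishing of the off-block-diagonal entries of $\Cov(\mY)$ is equivalent to $\HSIC(\mY'_{:,i}, \mY'_{:,j}) = 0$ on the input $\mY'$ of this nonlinear layer, with kernels $K_i = g(\mY'_{:,i}) g(\mY'_{:,i})^T$. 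Repeating this alternation through the MLP, each further random linear layer appends an outer random projection to the feature map, while each further nonlinearity appends another nonlinear transformation.

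After peeling off all layers, one obtains the equivalence ``minimizing $\Cov(\mZ)$ amounts to $\HSIC(\mX_{:,i}, \mX_{:,j}) = 0$ for all $i \neq j$'' under a compositional kernel $K$ built from the MLP's layers. Since randomized BN $+$ ReLU approximates random features of a universal kernel (e.g.\ Gaussian) and the composition of random linear layers with nonlinearities preserves universality in the wide-width limit, $K$ is universal. Invoking the HSIC independence characterization of \citet{gretton2005measuring} recalled in \cref{sec:background1}, I conclude that all pairs of features of $\mX$ are independent. The necessity of the variance term, already flagged in the remark after \cref{prop:nonlinar_proj}, prevents the trivial constant-$\mX$ collapse.

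The main obstacle is that each peeling step is only approximate at finite width: \cref{prop:linar_proj} relies on the random linear weights behaving like an orthogonal matrix, which holds only as the layer width grows (at the $\mathcal{O}(\nicefrac{1}{\sqrt{P}})$ rate discussed after \cref{prop:linar_proj}); and \cref{prop:nonlinar_proj} yields an exact HSIC only when $L$ is large enough for the random-feature kernel approximation to be sharp. A fully rigorous argument would therefore need to quantify how these approximations compound through the MLP and show that, in the wide-width limit, the compositional kernel $K$ remains universal and the equivalence between minimizing $\Cov(\mZ)$ and the vanishing of pairwise HSIC on $\mX$ stays tight; this is also why the experimental validation in \cref{sec:exp} is important.
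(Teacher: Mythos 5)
Your overall plan --- peeling the MLP one layer at a time, alternating \cref{prop:linar_proj} and \cref{prop:nonlinar_proj} --- matches the paper's proof for the outermost block, but it diverges at the crucial step of crossing an \emph{interior} linear layer, and there the proposal has a genuine gap. \Cref{prop:nonlinar_proj} identifies $\HSIC(\mX_{:,i},\mX_{:,j})$ with the $(i,j)$ block of $\Cov(\mZ)$ only because the projector is elementwise: the $i$-th output block $g(\mX_{:,i})$ depends on the $i$-th input feature alone, which is what makes the per-feature kernel $K_i=g(\mX_{:,i})g(\mX_{:,i})^T$ well defined. As soon as a linear layer sits between $\mX$ and a nonlinearity, every unit of that nonlinearity's input depends on \emph{all} features of $\mX$; the block structure is destroyed, and the ``compositional kernel $K$ built from the MLP's layers'' you invoke is not well defined as a collection of per-feature feature maps on $\mX$. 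Consequently the asserted equivalence between minimizing $\Cov(\mZ)$ and the vanishing of pairwise HSIC on $\mX$ under a universal compositional kernel does not go through as stated; whether the composed map ``preserves universality'' is moot if there is no per-feature kernel to be universal. The same obstruction arises at the innermost linear layer $\mX\mW^{(1)}$, which is precisely the one that must be crossed to reach $\mX$.

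The paper sidesteps this by transferring the \emph{property} rather than the \emph{criterion} across interior linear layers: after \cref{prop:nonlinar_proj} yields pairwise independence of the nonlinearity's input (equivalently, of the preceding linear layer's output), it invokes Theorem~11 of \citet{comon1994independent} to argue that, for a sufficiently wide random (hence approximately orthogonal) linear layer, pairwise independence of the output carries over to the input, and then recurses backward through the network, using \cref{prop:linar_proj} only when the outermost layer happens to be linear. If you wish to keep your kernel-composition viewpoint, you would need an argument of exactly this Comon type at each interior linear layer. Your closing paragraph correctly flags the finite-width and finite-$L$ approximation issues, but not this structural one, which is the step that actually fails.
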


\begin{proof}
Let us consider the last block of such MLP, \textit{i.e.} a fully-connected linear layer followed by an elementwise nonlinearity. According to \cref{prop:nonlinar_proj}, applying VCReg to the MLP, \textit{i.e.} at the output of the nonlinearity will enforce pairwise independence under corresponding kernel for the input of the nonlinearity, which is also the output of the last linear layer. %
If the latter is wide enough so that it can be considered orthogonal, Theorem 11 in~\cite{comon1994independent} ensures that pairwise independence is preserved for the input of the layer. We can then recursively extend the result backward to the whole MLP. If the last MLP layer is a fully-connected linear, then~\cref{prop:linar_proj} applies and we go back to the preceding elementwise nonlinearity.
\end{proof}
\cref{fig:implicit_reg_random} in \cref{app:more_hsic} shows that each hidden layer in the random MLP is recursively and implicitly VC-regularized from VCReg being applied only at the projector's output. Following \cref{cor:composition}, we expect wider projectors to better enforce pairwise independence while adding layers or learning the projector is not necessary; see \cref{sec:exp} for empirical validation.

\paragraph{Extension to BarlowTwins and W-MSE, and generality of VCReg.}~Our results focus on VCReg as formulated in VICReg but can in fact be extended to methods that constrain the covariance of $\mZ$ explicitly, namely BarlowTwins and W-MSE \citep{balestriero2022contrastive}. Indeed, the objective in W-MSE (Equations 3-4 in~\citet{ermolov2021whitening}) is VICReg with explicit constraint on the variance and covariance. %
Increasing the variance and covariance hyper-parameters in VICReg produces W-MSE hence our results extend seamlessly. The objective from~\cref{eq:BT} in BarlowTwins is also similar to VICReg. The derivation, deferred to \cref{proof:BT}, shows that minimizing the constrained form of BarlowTwins objective from~\cref{eq:BT} is equivalent to minimizing VICReg's invariance term whilst explicitly constraining the variance covariance terms as in W-MSE; hence our results also hold for BarlowTwins; see \cref{sec:exp} for empirical validation. In particular we will see that as BarlowTwins %
explicitly enforces minimum VCReg, it better optimizes HSIC compared to VICReg with standard hyper-parameters. Finally, as opposed to BarlowTwins loss and most SSL methods, VCReg can be used and be beneficial within single branch architectures. We provide such use case in \cref{sec:exp}.

\paragraph{Learned projectors.}~In state-of-the-art SSL representations, the projector is learned, which is not rigorously covered by \cref{cor:composition}.
Complementing the study of~\cite{bordes2022guillotine}, we argue that learning the projector is only crucial to satisfy the invariance criterion since random projectors are sufficient to obtain pairwise independent features, as demonstrated in \cref{sec:exp}. In fact, our experiments show that (i) using VICReg, keeping the projector random yield representations with low HSIC, (ii) using VICReg, learning the projector to optimize VCReg more strongly than the invariance term reduces performances, and (iii) using VCReg without an invariance term, learning the projector (e.g. in the later studied ICA setting) creates a degenerate representation that does not enforce pairwise independence.
We thus conjecture that learning the projector's parameters to mainly minimize VICReg's invariance term leaves the parameters close enough to their random initialization \citep{jacot2018neural} to maintain an accurate estimate of HSIC. In fact, we will see that the wider the projector, the less far away from initialization the parameters have to move, and the better optimized HSIC. Importantly, to better mimic the behavior of a learned projector, we will also conduct experiments where the random projector is resampled at each optimization step.
\section{Related Work}

\paragraph{Feature decorrelation,} or whitening, ensures that correlation between each pair of different features in a batch of feature vectors is zero, and that each feature has unit variance. It was originally used as a data pre-processing technique, see e.g.~\cite{hyvarinen2000independent}, before being extended to deep networks~\citep{cogswell2015reducing} as a regularizer. In the context of SSL, \citet{hua2021feature,ermolov2021whitening} find that feature decorrelation helps solving the collapse issue. The former avoids collapse via appropriate Batch Normalization and its decorrelating variant~\citep{huang2018decorrelated} in the projector. This variant of Batch Normalization can be seen as the hard constraint counterpart of BarlowTwins. Practically, whitening can be implemented by learning a fully-connected layer~\citep{husain2019remap} recovering Principal Component Analysis; as mentioned earlier (Section~\ref{sec:result}), this differs from VCReg which keeps the layer's parameters random and optimizes its input.

\paragraph{Independence criterion for learning features.} Enforcing mutual independence to learn a representation has been proposed for example by~\citet{schmidhuber1992learning}. More recently, and in the context of supervised learning, \citet{chen2019rethinking} demonstrated improved training of ResNets by reducing the pairwise dependence in the features at each layer via a combination of Dropout and Batch Normalization. By opposition, VCReg is applied to the projector's output and common SSL projectors do not rely on Dropout; we also show that Batch Normalization is not necessary to reduce pairwise dependence.

\paragraph{HSIC in supervised and self-supervised learning.} HSIC-based losses have been employed in supervised learning e.g. see \citet{mooij2009regression,greenfeld2020robust} to ensure independence between the residual errors and the labels of a task at hand. \citet{kornblith2019similarity} measure similarity between two neural representations via HSIC. In SSL, \citet{tsai2021a} showed that a modified version of BarlowTwins loss maximizes HSIC under a linear kernel between the embeddings of two augmented versions of the same sample. In a similar fashion, \citet{li2021self} proposed a SSL framework based on maximizing the dependence between the embeddings of transformations of an image and the image identity via HSIC. Both works differ from ours, which connects VCReg to the minimization of HSIC between pairs of features in the representation.
\section{Experiments}
\label{sec:exp}

We first put in evidence pairwise independence properties emerging in learned visual representations, before validating our theoretical findings from Section~\ref{sec:result}. Then, we demonstrate low HSIC to be beneficial for in-domain and out-of-domain generalization. Finally, we ablate the SSL context and demonstrate that the composition of a SSL-like random projector with VCReg induces enough independence to perform ICA. Importantly, none of the experiments with VCReg use Dropout, and Batch Normalization is not used when the projector is random. Hence, pairwise independence cannot be attributed to those two techniques as opposed to~\citet{chen2019rethinking, hua2021feature}.

\subsection{Pairwise independence emerges in most visual representations}
\label{exp:analysis}

\begin{table*}[ht!]
    \small
    \centering
    \caption{Pairwise dependence (HSIC), pairwise independence testing, and test accuracy for SSL or supervised representations averaged on multiple subsets of features, with internal variance. VCReg methods, including SimCLR~\citep{garrido2022duality}, further improve HSIC.}
    \begin{tabular}{l|c|c|c|c} \toprule
    {Method (100 epochs)} & {Normalized HSIC $\downarrow$} & {\% of independent pairs $\uparrow$} & {ImageNet Top1 $\uparrow$} \\
    \midrule
    {BarlowTwins~\citep{zbontar2021barlow}}  & 0.006 $\pm$ 0.004 & 38 $\pm$ 2 & 67.8 \\
    {SimCLR~\citep{chen2020simple}}  & 0.008 $\pm$ 0.008 & 36 $\pm$ 1 & 67.7 \\
    {VICReg~\citep{bardes2021vicreg}}  & 0.009 $\pm$ 0.005 & 34 $\pm$ 2 & 68.1 \\
    {DINO~\citep{caron2021emerging}} & 0.01 $\pm$ 0.01 & 35 $\pm$ 3 & 70.4 \\
    \midrule
    {Supervised~\citep{wightman2021resnet}} & 0.010 $\pm$ 0.008 & 33 $\pm$ 3 & 78.1 \\
    \bottomrule
    \end{tabular}
    \label{fig:indep_table}
\vspace{-.5cm}
\end{table*}

\paragraph{Setup.} In these experiments, we track two metrics for statistical independence of the components in the learned representation during ResNet50 training on ImageNet with popular SSL frameworks, as well as a supervised baseline. The first metric, HSIC~\citep{gretton2005measuring}, tracks pairwise independence. The second metric, dHSIC~\citep{pfister2018kernel}, tracks mutual independence. As explained in~\cref{sec:background1}, neither HSIC nor dHSIC scale to the full representation (2048 components). Therefore, we instead rely on proxys: for pairwise independence, we compute and average Normalized HSIC~\citep{kornblith2019similarity} on all pairs of the first $10$ components of the representation. For mutual independence, we sample sets of $10$ components for which we display dHSIC. We compute these statistics on the same batch of $1000$ samples (one per class) from the ImageNet validation set unless stated otherwise. A decrease of these metrics throughout the training would suggest decreasing dependence among the representations. Although statistical tests are available for both metrics, we track bare HSIC and dHSIC as they are continuous. However, we perform HSIC tests in this first serie of experiments so that the bare values can be linked to portions of independent pairs of components in the representation. See Appendix~\ref{app:exp_details} for our detailed setup.

\paragraph{Results.} Figure~\ref{fig:indep_training} suggests that VICReg implicitly optimizes pairwise independence of the features %
in the representation throughout the training (HSIC, left), but not mutual independence (dHSIC, right). We will therefore not study dHSIC in subsections~\ref{exp:val_theory} and \ref{exp:generalization}, while an ablation in subsection~\ref{exp:ica} further confirm that VCReg cannot optimize dHSIC.
\cref{fig:indep_table} shows that VCReg frameworks (VICReg, Barlow Twins) and related (SimCLR\footnote{~\citet{garrido2022duality} demonstrate that up to implementation details, VICReg and SimCLR optimize the same objective and can in fact achieve the same performance with careful hyperparameter tuning.}) achieve lower HSIC than DINO and Supervised, and Barlow Twins better enforces low HSIC since its covariance matrix is constrained, as expected from~\cref{sec:result}.
Precise values along with HSIC independence tests at level $\alpha=0.5$ and corresponding test accuracies can be found in Table~\ref{fig:indep_table}. Lower HSIC values do not necessarily entail more successful tests: it is possible to allocate low HSIC to sufficiently many pairs while having an overall larger HSIC. For example, Supervised has less independent pairs in spite of an overall smaller HSIC than DINO.
Table~\ref{fig:indep_table} also suggests that pairwise independence is not perfectly predictive of test accuracy. Although it seems to be a desirable property, it is not sufficient to yield a good representation: for example, DINO achieves better test accuracy than BarlowTwins in spite of higher HSIC. \cref{exp:generalization} will however show and discuss how HSIC is correlated with in domain and out of domain downstream accuracy.

\subsection{Projector characteristics fostering or hurting pairwise independence}
\label{exp:val_theory}

\paragraph{Setup.} These experiments validate the results from Section~\ref{sec:result} by illustrating which projector characteristics help or hurt HSIC optimization when learning visual representations. The setup is the same as above except that dHSIC is not measured. %
However, the projector can be random with resampling at each optimization step or without, as in~\cref{cor:composition}, or learned as in~\cref{exp:analysis}. We apply an invariance loss along with VCReg on the output of the projector and scale the covariance coefficient with its width $P$ for fair comparison since the covariance term in VCReg grows quadratically with the projector output dimension (see~\cref{eq:VCReg}). See Appendix~\ref{app:exp_details} for our detailed setup.

\paragraph{Results.} Figure~\ref{fig:hsic_vs_dim} supports the theory from~\cref{sec:result}: low HSIC can be achieved by random projectors. Hence, learning the projector is not necessary to obtain pairwise independence. As predicted in theorey, for all three settings, HSIC is better optimized for wider projectors. HSIC can be influenced by other factors of variations: as conjectured in~\cref{sec:result}, for random projectors, increasing depth hurts HSIC, see~\cref{fig:hsic_vs_depth} in Appendix~\ref{app:more_hsic}. Increasing the covariance coefficient in VCReg also improves HSIC, albeit slightly at the expense of downstream accuracy, see Table~\ref{fig:cov_coeff_hsic} in Appendix~\ref{app:more_hsic}.
\begin{figure}
\begin{center}
    \centerline{\includegraphics[width=.95\columnwidth]{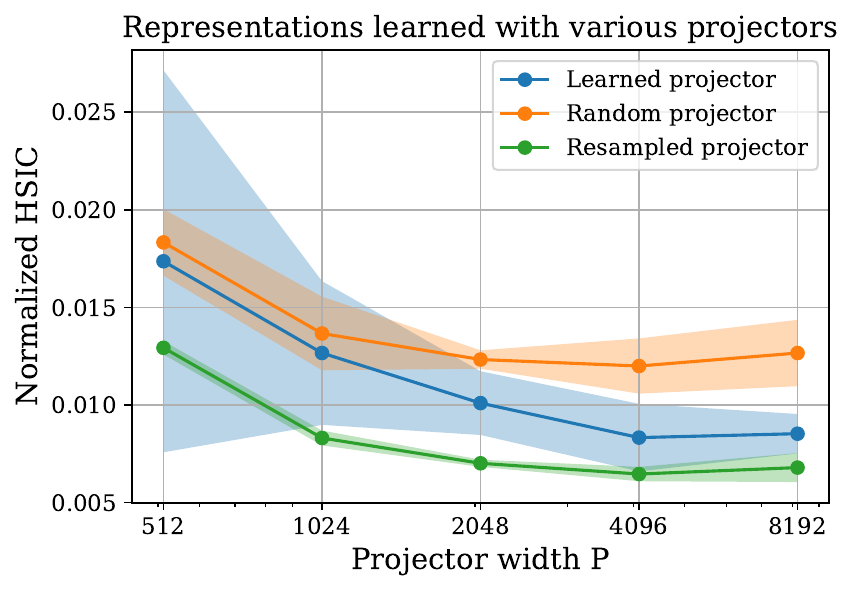}}
    \caption{Normalized HSIC (computed on ImageNet validation) for various representations learned with different projectors width and setup: learned, random, random and resampled at each optimization step (3 runs each). Wider projectors and resampling both yield more pairwise independence in the learned representation.}
    \label{fig:hsic_vs_dim}
\end{center}
\vspace{-1cm}
\end{figure}
We observe that (i) random resampled projectors from~\cref{sec:result} learn representations with better pairwise independence properties than learned projectors --we will heavily use this property in~\cref{exp:ica}--, and (ii) random projectors do worse. (i) can be explained by the fact that learning the projector to optimize VCReg is detrimental to HSIC since it could achieve trivial solutions such as $Z = g(X) = (X_1, 0, \dots, 0)$: the presence of the invariance term explains why a learned projector does not collapse on such solutions. (ii) can be explained as follow: since the projector weights are fixed, the encoder only learns to decorrelate a limited set of nonlinear functions as opposed to the random resampled or learned projectors, whose weights move throughout the training. 
Although random resampled projectors achieve lower HSIC, most of them are not as competitive as learned projectors in terms of in domain test accuracy of the learned representation, see~\cref{fig:hsic_downfall}. This is not surprising, as there is no reason for the best $D=2048$ representations for ImageNet to have all their features pairwise independent. In fact, this suggests that optimizing too much HSIC can be detrimental for the quality of the representation: a possible explanation is that it is not possible to extract all the relevant information from the training set with respect to classification while preserving total pairwise independence in the representation. This complements the view of~\cite{appalaraju2020towards,bordes2022guillotine}, which argue that the projector requires some learning capacity to filter out information that is irrelevant to fulfill the invariance loss. 
Finally,~\cref{fig:hsic_vs_depth} in Appendix~\ref{app:add_exps} shows that adding layers to random projectors is detrimental to HSIC: implicit VCReg of the activations, an assumption of~\cref{cor:composition}, will be more and more loosely enforced. These experiments suggest that the projector capacity should rather be increased via width than via depth: adding layers can be detrimental to HSIC as seen above but also to test accuracy~\citep{appalaraju2020towards,chen2021intriguing}.
\vspace{-.6cm}
\begin{table}[h]
    \small
    \centering
    \caption{Normalized HSIC (divided by 1e3 for readability) and Top1 accuracy on ImageNet (both computed on validation) for representations learned with different projector types ($P=8192$).}
    \begin{tabular}{lccc} \toprule
    {Projector type} & Random & Learned & Resampled \\
    \midrule
    {Norm. HSIC ($\downarrow$)} & 13 $\pm$ 2 & 9 $\pm$ 1 & 6.8 $\pm$ 0.1  \\
    \midrule
    {Top1 ImageNet ($\uparrow$)} & 54.1$\pm$0.1 & 68.12$\pm$0.01 & 52.5$\pm$0.2 \\
    \bottomrule
    \end{tabular}
\label{fig:hsic_downfall}
\end{table}
\vspace{-.7cm}
\subsection{HSIC is correlated to in-domain and out-of-domain downstream accuracy}
\label{exp:generalization}
In this experiment, we plot the Normalized HSIC and Relative Top1 accuracy on various datasets for the representations trained in this work, \textit{i.e.}, for various projector configurations, see~\cref{fig:hsic_vs_acc}. Overall, HSIC is correlated to in-domain and out-of-domain downstream accuracy. As expected from~\cref{fig:hsic_downfall}, an elbow appears around Normalized HSIC$=0.008$: below this threshold, the representation gains pairwise independence at the expense of its information content, a form of collapse. Crucially, HSIC does not require labels to be computed hence could be used for model selection in SSL.

\begin{figure}
\begin{center}
\centerline{\includegraphics[width=.95\columnwidth]{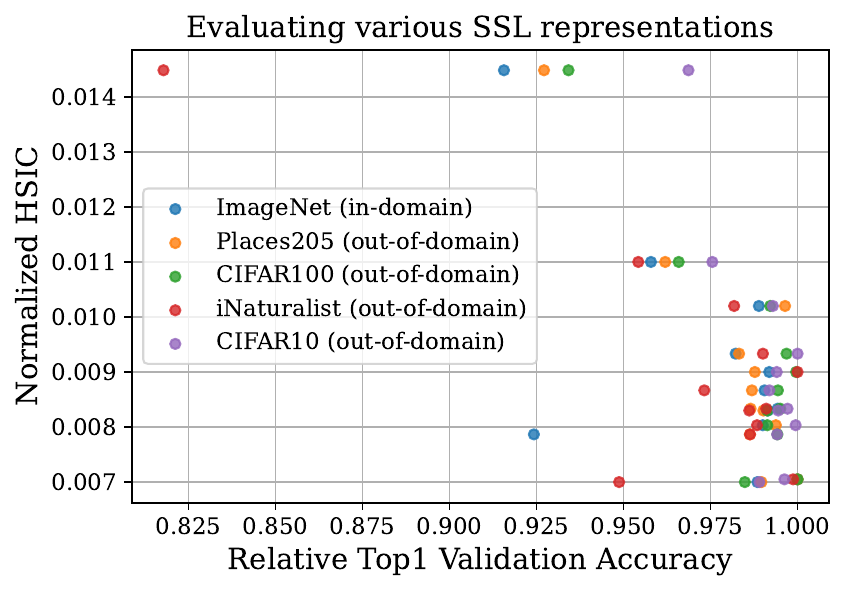}}
\caption{\small Normalized HSIC (computed on ImageNet validation) of representations correlates with downstream accuracy both in domain and out of domain. To each HSIC level corresponds a representation. For each dataset, the accuracies were rescaled with respect to their maximum for better readability.}
\label{fig:hsic_vs_acc}
\end{center}
\vspace{-1cm}
\end{figure}

\subsection{Ablation: Independent Component Analysis (ICA) with VCReg}

To demonstrate that our findings hold outside of SSL, we show that VCReg of a SSL-like projector's output induces enough pairwise independence to solve linear ICA, \textit{i.e.} recovering independent sources $\mS \in \mathbb{R}^{N \times D}$ from a mixture $\mY = \mS \mA \in \mathbb{R}^{N \times D}$. See Appendix~\ref{app:background} for an introduction to ICA.
\vspace{-.5cm}
\paragraph{VC regularized random projectors solve linear ICA, learned projector don't.}
\label{exp:ica}
In this setting, finding $\mM$ enforcing \textit{pairwise} independence in the components of $\mY\mM$ is generally sufficient to recover $\mS$~(\cite{comon1994independent}, Theorem 11). VCReg of a random projector's output should therefore be able to recover $\mS$.
Our model can be seen on Figure~\ref{fig:archi_linearica}: whitened batches of mixtures $\mY$ are fed to an encoder $f$. Here, $f$ is a linear transformation $\mM$ described above. The output $\mX = f(\mY)$ is then fed to a projector $g$, and VCReg is applied to the output $\mZ = g(\mX)$ covariance matrix. %
As in Experiments~\ref{exp:val_theory}, the projector is randomly resampled at each gradient step to get lowest possible HSIC. This setting corresponds to one branch of a VICReg network (i) with a linear projection $M$ encoder instead of a neural network, (ii) with a random projector, and (iii) without invariance criterion. We perform ICA on two datasets, a synthetic one~\citep{brakel2017learning} with 6 sources among which 2 are noise, and a real audio dataset~\citep{kabal2002tsp} with 3 sources among which one is noise, also used in~\cite{brakel2017learning}. The evaluation metric is the maximum correlation between true and reconstructed sources. This metric is not available without ground truth hence we select the model with lowest dHSIC for $\mX$, which can be exactly evaluated since the number of sources is small.
\begin{figure*}
\centering
\includegraphics[width=.7\textwidth, trim={0 3cm 7cm 12.3cm},clip]{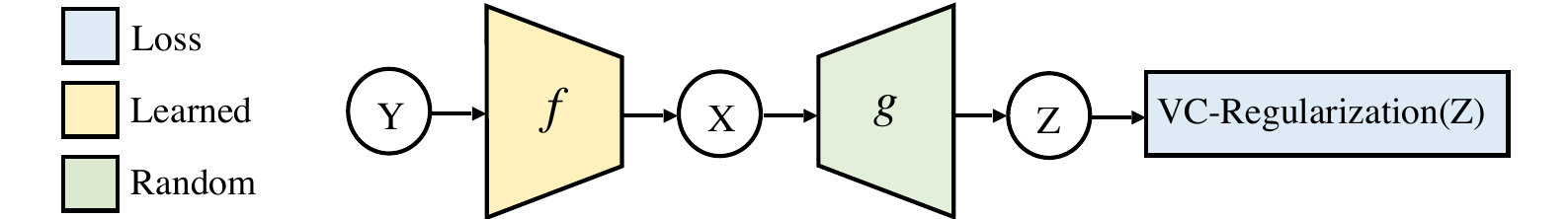}\\[-0.5em]
\caption{\small Linear ICA model expressed from VCReg of the projector's output.}
\label{fig:archi_linearica}
\end{figure*}

\begin{table*}
    \centering
    \small
    \caption{\small Maximum correlation between true and reconstructed sources (the higher the better $\uparrow$). The projector performs competitive reconstruction in the linear setting (left), where pairwise independence is sufficient, but not in PNL (right) where it is not.}
    \begin{tabular}{l|c|c|c|c|c} \toprule
    \multicolumn{3}{c|}{Linear mixture} & \multicolumn{3}{c}{Post Non Linear mixture} \\
    \midrule
    {Method} & {Synthetic data} & {Audio data} & {Method} & {Synthetic data} & {Audio data}\\
    \midrule
    {Whitening} & 0.8074 & 0.9876 & {Whitening} & 0.7981 & 0.9046 \\ 
    {FastICA}  & 0.9998 &  1.0 & {FastICA} &  0.8311 & 0.8989 \\
    {Anica}  & 0.9987$\pm$6.5e-4 & 0.9996$\pm$4.9e-4 & {Anica} & 0.9794$\pm$53e-4 & 0.9929$\pm$18e-4 \\
    {VCReg}  & 0.9986$\pm$8.2e-4 & 0.9936$\pm$64e-4 & {VCReg}  & 0.8465$\pm$142e-4 & 0.8706$\pm$376e-4 \\
    \bottomrule
    \end{tabular}
    \label{tab:max_corr}
\end{table*}

Our model recovers the sources, and is even competitive with methods specifically designed for linear ICA such as Fast ICA~\citep{hyvarinen2000independent} or Anica~\citep{brakel2017learning}, see Table~\ref{tab:max_corr}. Both increasing the width of the projector and resampling it at each step (especially for smaller projectors) improve the reconstruction as can be seen in Figure~\ref{fig:ica}. This is in line with our findings in Section~\ref{sec:result} and Experiment~\ref{exp:val_theory}.
ICA experiments done with a learned projector fail to recover independent sources: this further demonstrates that learning the projector to optimize VCReg is counter-productive, and that in the context of SSL, learning the projector is rather useful for satisfying the invariance criterion, which is absent in the ICA experiments. VCReg is rather optimized by the encoder $f$.

\begin{figure}
\begin{center}
\centerline{\includegraphics[width=.85\columnwidth]{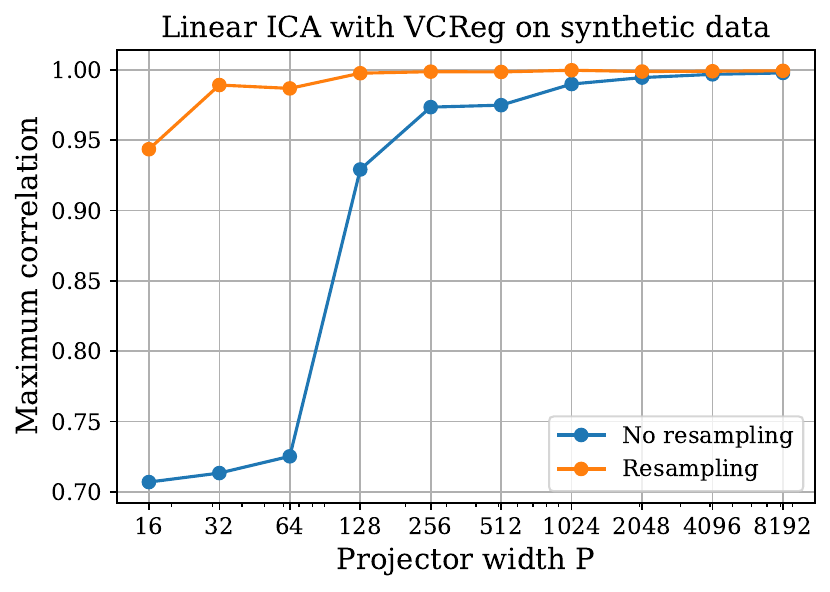}}
\caption{Resampling and increased width of the projector $g$ both improve the quality of the source reconstruction measured by the maximum correlation between true and reconstructed sources ($\uparrow$).}
\label{fig:ica}
\end{center}
\vspace{-.2cm}
\end{figure}
\vspace{-.2cm}
\paragraph{VC regularized projectors do not solve nonlinear ICA.}
\cref{fig:indep_training} suggests that mutual independence is optimized during training although not properly: one could ask whether VCReg also enforces it enough to solve nonlinear ICA. To test this hypothesis, we apply VCReg to a particular case of nonlinear ICA which allows identifiability but does not have equivalence between pairwise and mutual independence: the post-nonlinear mixture (PNL)~\citep{taleb1999source}. In PNL, the sources are linearly mixed before being fed to elementwise nonlinear functions. For these experiments, and following~\cite{brakel2017learning}, our encoder is a MLP. During our first experiments, we observed an informational collapse of the encoder, which produces seemingly mutually independent variables with very poor reconstruction of the sources. To alleviate this issue, we add a decoder taking $\mX$ as input and reconstructing $\mY$. Figure~\ref{fig:archi_pnlica} in Appendix~\ref{app:exp_details} shows our modified setup. We compare VCReg to FastICA and Anica. Although FastICA is not meant to solve the nonlinear case, it remains an interesting baseline. Table~\ref{tab:max_corr} shows that our model fails to recover the sources, as it does only slightly better than FastICA in the synthetic case. %
Hence, although mutual independence increases during training, VCReg does not optimize it enough to solve nonlinear ICA. We propose a simple explanation for this limitation. Indeed, each feature in $\mZ$ is a nonlinear function of all features in $\mX$. Hence, it is not possible to completely decorrelate two components of $\mZ$ as they both contain the same set of features. It is still possible to improve mutual independence to some extent since, in practice, only parts of the inputs are considered at once by nonlinear mappings such as neural networks~\citep{erhan2009visualizing,adebayo2018sanity}.

\vspace{-.3cm}
\section{Conclusion}
\vspace{-.1cm}
This work provides the first theoretical explanation of SSL projectors by demonstrating they enforce pairwise independence in the learned representation. This quantity is correlated to in-domain and out-of-domain downstream accuracy and could be further exploited for model selection without labels. The classical evaluation of SSL being linear probing, further work could also explore whether pairwise independence is connected to linear separability of the features. Finally, this work focuses on random weights with uniform distribution yet opens the way to weight distributions that are closer to training parameters in practice which, for example, have been characterized for over-parameterized networks~\citep{jacot2018neural}.

\newpage
\newpage

\bibliography{icml2023}

\begin{thebibliography}{65}
\providecommand{\natexlab}[1]{#1}
\providecommand{\url}[1]{\texttt{#1}}
\expandafter\ifx\csname urlstyle\endcsname\relax
  \providecommand{\doi}[1]{doi: #1}\else
  \providecommand{\doi}{doi: \begingroup \urlstyle{rm}\Url}\fi

\bibitem[Adebayo et~al.(2018)Adebayo, Gilmer, Muelly, Goodfellow, Hardt, and
  Kim]{adebayo2018sanity}
Adebayo, J., Gilmer, J., Muelly, M., Goodfellow, I., Hardt, M., and Kim, B.
\newblock Sanity checks for saliency maps.
\newblock \emph{Advances in Neural Information Processing Systems (NeurIPS)},
  2018.

\bibitem[Appalaraju et~al.(2020)Appalaraju, Zhu, Xie, and
  Feh{\'e}rv{\'a}ri]{appalaraju2020towards}
Appalaraju, S., Zhu, Y., Xie, Y., and Feh{\'e}rv{\'a}ri, I.
\newblock Towards good practices in self-supervised representation learning.
\newblock \emph{arXiv preprint arXiv:2012.00868}, 2020.

\bibitem[Bach \& Jordan(2002)Bach and Jordan]{bach2002kernel}
Bach, F.~R. and Jordan, M.~I.
\newblock Kernel independent component analysis.
\newblock \emph{Journal of Machine Learning Research (JMLR)}, 2002.

\bibitem[Balestriero \& LeCun(2022)Balestriero and
  LeCun]{balestriero2022contrastive}
Balestriero, R. and LeCun, Y.
\newblock Contrastive and non-contrastive self-supervised learning recover
  global and local spectral embedding methods.
\newblock \emph{arxiv preprint arXiv:2205.11508}, 2022.

\bibitem[Bardes et~al.(2022)Bardes, Ponce, and LeCun]{bardes2021vicreg}
Bardes, A., Ponce, J., and LeCun, Y.
\newblock Vicreg: Variance-invariance-covariance regularization for
  self-supervised learning.
\newblock \emph{International Conference on Learning Representations (ICLR)},
  2022.

\bibitem[Blundell et~al.(2015)Blundell, Cornebise, Kavukcuoglu, and
  Wierstra]{blundell2015weight}
Blundell, C., Cornebise, J., Kavukcuoglu, K., and Wierstra, D.
\newblock Weight uncertainty in neural networks.
\newblock In \emph{International Conference on Machine Learning (ICML)}, 2015.

\bibitem[Bordes et~al.(2022)Bordes, Balestriero, Garrido, Bardes, and
  Vincent]{bordes2022guillotine}
Bordes, F., Balestriero, R., Garrido, Q., Bardes, A., and Vincent, P.
\newblock Guillotine regularization: Improving deep networks generalization by
  removing their head.
\newblock \emph{arXiv preprint arXiv:2206.13378}, 2022.

\bibitem[Brakel \& Bengio(2017)Brakel and Bengio]{brakel2017learning}
Brakel, P. and Bengio, Y.
\newblock Learning independent features with adversarial nets for non-linear
  ica.
\newblock \emph{arXiv preprint arXiv:1710.05050}, 2017.

\bibitem[Caron et~al.(2020)Caron, Misra, Mairal, Goyal, Bojanowski, and
  Joulin]{caron2020unsupervised}
Caron, M., Misra, I., Mairal, J., Goyal, P., Bojanowski, P., and Joulin, A.
\newblock Unsupervised learning of visual features by contrasting cluster
  assignments.
\newblock \emph{Advances in Neural Information Processing Systems (NeurIPS)},
  2020.

\bibitem[Caron et~al.(2021)Caron, Touvron, Misra, J{\'e}gou, Mairal,
  Bojanowski, and Joulin]{caron2021emerging}
Caron, M., Touvron, H., Misra, I., J{\'e}gou, H., Mairal, J., Bojanowski, P.,
  and Joulin, A.
\newblock Emerging properties in self-supervised vision transformers.
\newblock In \emph{Proceedings of the IEEE/CVF International Conference on
  Computer Vision}, pp.\  9650--9660, 2021.

\bibitem[Chen \& Phillips(2017)Chen and Phillips]{chen2017relative}
Chen, D. and Phillips, J.~M.
\newblock Relative error embeddings of the gaussian kernel distance.
\newblock In \emph{International Conference on Algorithmic Learning Theory},
  pp.\  560--576. PMLR, 2017.

\bibitem[Chen et~al.(2019)Chen, Chen, Shi, Hsieh, Liao, and
  Zhang]{chen2019rethinking}
Chen, G., Chen, P., Shi, Y., Hsieh, C.-Y., Liao, B., and Zhang, S.
\newblock Rethinking the usage of batch normalization and dropout in the
  training of deep neural networks.
\newblock \emph{arXiv preprint arXiv:1905.05928}, 2019.

\bibitem[Chen et~al.(2020)Chen, Kornblith, Norouzi, and Hinton]{chen2020simple}
Chen, T., Kornblith, S., Norouzi, M., and Hinton, G.
\newblock A simple framework for contrastive learning of visual
  representations.
\newblock In \emph{International Conference on Machine Learning (ICML)}, 2020.

\bibitem[Chen et~al.(2021)Chen, Luo, and Li]{chen2021intriguing}
Chen, T., Luo, C., and Li, L.
\newblock Intriguing properties of contrastive losses.
\newblock \emph{Advances in Neural Information Processing Systems (NeurIPS)},
  2021.

\bibitem[Chen \& He(2021)Chen and He]{chen2021exploring}
Chen, X. and He, K.
\newblock Exploring simple siamese representation learning.
\newblock In \emph{Proceedings of the IEEE/CVF Conference on Computer Vision
  and Pattern Recognition}, pp.\  15750--15758, 2021.

\bibitem[Cogswell et~al.(2015)Cogswell, Ahmed, Girshick, Zitnick, and
  Batra]{cogswell2015reducing}
Cogswell, M., Ahmed, F., Girshick, R., Zitnick, L., and Batra, D.
\newblock Reducing overfitting in deep networks by decorrelating
  representations.
\newblock \emph{arXiv preprint arXiv:1511.06068}, 2015.

\bibitem[Comon(1994)]{comon1994independent}
Comon, P.
\newblock Independent component analysis, a new concept?
\newblock \emph{Signal processing}, 36\penalty0 (3):\penalty0 287--314, 1994.

\bibitem[Cosentino et~al.(2022)Cosentino, Sengupta, Avestimehr, Soltanolkotabi,
  Ortega, Willke, and Tepper]{cosentino2022toward}
Cosentino, R., Sengupta, A., Avestimehr, S., Soltanolkotabi, M., Ortega, A.,
  Willke, T., and Tepper, M.
\newblock Toward a geometrical understanding of self-supervised contrastive
  learning.
\newblock \emph{arXiv preprint arXiv:2205.06926}, 2022.

\bibitem[Cover(1999)]{cover1999elements}
Cover, T.~M.
\newblock \emph{Elements of information theory}.
\newblock John Wiley \& Sons, 1999.

\bibitem[Dosovitskiy et~al.(2020)Dosovitskiy, Beyer, Kolesnikov, Weissenborn,
  Zhai, Unterthiner, Dehghani, Minderer, Heigold, Gelly,
  et~al.]{dosovitskiy2020image}
Dosovitskiy, A., Beyer, L., Kolesnikov, A., Weissenborn, D., Zhai, X.,
  Unterthiner, T., Dehghani, M., Minderer, M., Heigold, G., Gelly, S., et~al.
\newblock An image is worth 16x16 words: Transformers for image recognition at
  scale.
\newblock \emph{International Conference on Learning Representations (ICLR)},
  2020.

\bibitem[Erhan et~al.(2009)Erhan, Bengio, Courville, and
  Vincent]{erhan2009visualizing}
Erhan, D., Bengio, Y., Courville, A., and Vincent, P.
\newblock Visualizing higher-layer features of a deep network.
\newblock \emph{University of Montreal}, 1341\penalty0 (3):\penalty0 1, 2009.

\bibitem[Ermolov et~al.(2021)Ermolov, Siarohin, Sangineto, and
  Sebe]{ermolov2021whitening}
Ermolov, A., Siarohin, A., Sangineto, E., and Sebe, N.
\newblock Whitening for self-supervised representation learning.
\newblock In \emph{International Conference on Machine Learning (ICML)}, 2021.

\bibitem[Franchi et~al.(2020)Franchi, Bursuc, Aldea, Dubuisson, and
  Bloch]{franchi2020tradi}
Franchi, G., Bursuc, A., Aldea, E., Dubuisson, S., and Bloch, I.
\newblock Tradi: Tracking deep neural network weight distributions for
  uncertainty estimation.
\newblock In \emph{Proceedings of the European Conference on Computer Vision
  (ECCV)}, 2020.

\bibitem[Garrido et~al.(2022)Garrido, Chen, Bardes, Najman, and
  Lecun]{garrido2022duality}
Garrido, Q., Chen, Y., Bardes, A., Najman, L., and Lecun, Y.
\newblock On the duality between contrastive and non-contrastive
  self-supervised learning.
\newblock \emph{arXiv preprint arXiv:2206.02574}, 2022.

\bibitem[Goebel et~al.(2005)Goebel, Dawy, Hagenauer, and
  Mueller]{goebel2005approximation}
Goebel, B., Dawy, Z., Hagenauer, J., and Mueller, J.~C.
\newblock An approximation to the distribution of finite sample size mutual
  information estimates.
\newblock In \emph{IEEE International Conference on Communications, 2005. ICC
  2005. 2005}, volume~2, pp.\  1102--1106. IEEE, 2005.

\bibitem[Greenfeld \& Shalit(2020)Greenfeld and Shalit]{greenfeld2020robust}
Greenfeld, D. and Shalit, U.
\newblock Robust learning with the hilbert-schmidt independence criterion.
\newblock In \emph{International Conference on Machine Learning (ICML)}, 2020.

\bibitem[Gretton et~al.(2005{\natexlab{a}})Gretton, Bousquet, Smola, and
  Sch{\"o}lkopf]{gretton2005measuring}
Gretton, A., Bousquet, O., Smola, A., and Sch{\"o}lkopf, B.
\newblock Measuring statistical dependence with hilbert-schmidt norms.
\newblock In \emph{International conference on algorithmic learning theory},
  pp.\  63--77. Springer, 2005{\natexlab{a}}.

\bibitem[Gretton et~al.(2005{\natexlab{b}})Gretton, Herbrich, Smola, Bousquet,
  Sch{\"o}lkopf, et~al.]{gretton2005kernel}
Gretton, A., Herbrich, R., Smola, A., Bousquet, O., Sch{\"o}lkopf, B., et~al.
\newblock Kernel methods for measuring independence.
\newblock \emph{Journal of Machine Learning Research (JMLR)},
  2005{\natexlab{b}}.

\bibitem[Grill et~al.(2020)Grill, Strub, Altch{\'e}, Tallec, Richemond,
  Buchatskaya, Doersch, Avila~Pires, Guo, Gheshlaghi~Azar,
  et~al.]{grill2020bootstrap}
Grill, J.-B., Strub, F., Altch{\'e}, F., Tallec, C., Richemond, P.,
  Buchatskaya, E., Doersch, C., Avila~Pires, B., Guo, Z., Gheshlaghi~Azar, M.,
  et~al.
\newblock Bootstrap your own latent-a new approach to self-supervised learning.
\newblock \emph{Advances in Neural Information Processing Systems (NeurIPS)},
  2020.

\bibitem[HaoChen et~al.(2021)HaoChen, Wei, Gaidon, and Ma]{haochen2021provable}
HaoChen, J.~Z., Wei, C., Gaidon, A., and Ma, T.
\newblock Provable guarantees for self-supervised deep learning with spectral
  contrastive loss.
\newblock \emph{Advances in Neural Information Processing Systems (NeurIPS)},
  2021.

\bibitem[He et~al.(2016)He, Zhang, Ren, and Sun]{he2016deep}
He, K., Zhang, X., Ren, S., and Sun, J.
\newblock Deep residual learning for image recognition.
\newblock In \emph{Proceedings of the IEEE conference on computer vision and
  pattern recognition}, pp.\  770--778, 2016.

\bibitem[He et~al.(2021)He, Chen, Xie, Li, Doll{\'a}r, and
  Girshick]{he2021masked}
He, K., Chen, X., Xie, S., Li, Y., Doll{\'a}r, P., and Girshick, R.
\newblock Masked autoencoders are scalable vision learners.
\newblock \emph{arXiv preprint arXiv:2111.06377}, 2021.

\bibitem[Hua et~al.(2021)Hua, Wang, Xue, Ren, Wang, and Zhao]{hua2021feature}
Hua, T., Wang, W., Xue, Z., Ren, S., Wang, Y., and Zhao, H.
\newblock On feature decorrelation in self-supervised learning.
\newblock In \emph{Proceedings of the IEEE/CVF International Conference on
  Computer Vision}, pp.\  9598--9608, 2021.

\bibitem[Huang et~al.(2018)Huang, Yang, Lang, and Deng]{huang2018decorrelated}
Huang, L., Yang, D., Lang, B., and Deng, J.
\newblock Decorrelated batch normalization.
\newblock In \emph{Proceedings of the IEEE Conference on Computer Vision and
  Pattern Recognition}, pp.\  791--800, 2018.

\bibitem[Huang et~al.(2021)Huang, Yi, and Zhao]{huang2021towards}
Huang, W., Yi, M., and Zhao, X.
\newblock Towards the generalization of contrastive self-supervised learning.
\newblock \emph{arXiv preprint arXiv:2111.00743}, 2021.

\bibitem[Husain \& Bober(2019)Husain and Bober]{husain2019remap}
Husain, S.~S. and Bober, M.
\newblock Remap: Multi-layer entropy-guided pooling of dense cnn features for
  image retrieval.
\newblock \emph{IEEE Transactions on Image Processing}, 28\penalty0
  (10):\penalty0 5201--5213, 2019.

\bibitem[Hyv{\"a}rinen \& Oja(2000)Hyv{\"a}rinen and
  Oja]{hyvarinen2000independent}
Hyv{\"a}rinen, A. and Oja, E.
\newblock Independent component analysis: algorithms and applications.
\newblock \emph{Neural networks}, 13\penalty0 (4-5):\penalty0 411--430, 2000.

\bibitem[Jacot et~al.(2018)Jacot, Gabriel, and Hongler]{jacot2018neural}
Jacot, A., Gabriel, F., and Hongler, C.
\newblock Neural tangent kernel: Convergence and generalization in neural
  networks.
\newblock \emph{Advances in neural information processing systems}, 31, 2018.

\bibitem[Jing et~al.(2021)Jing, Vincent, LeCun, and
  Tian]{jing2021understanding}
Jing, L., Vincent, P., LeCun, Y., and Tian, Y.
\newblock Understanding dimensional collapse in contrastive self-supervised
  learning.
\newblock \emph{arXiv preprint arXiv:2110.09348}, 2021.

\bibitem[Kabal(2002)]{kabal2002tsp}
Kabal, P.
\newblock Tsp speech database, 2002.
\newblock McGill University, Database Version, 1(0):09-02.

\bibitem[Kornblith et~al.(2019)Kornblith, Norouzi, Lee, and
  Hinton]{kornblith2019similarity}
Kornblith, S., Norouzi, M., Lee, H., and Hinton, G.
\newblock Similarity of neural network representations revisited.
\newblock In \emph{International Conference on Machine Learning (ICML)}, 2019.

\bibitem[Leurgans et~al.(1993)Leurgans, Moyeed, and
  Silverman]{leurgans1993canonical}
Leurgans, S.~E., Moyeed, R.~A., and Silverman, B.~W.
\newblock Canonical correlation analysis when the data are curves.
\newblock \emph{Journal of the Royal Statistical Society: Series B
  (Methodological)}, 55\penalty0 (3):\penalty0 725--740, 1993.

\bibitem[Li et~al.(2021)Li, Pogodin, Sutherland, and Gretton]{li2021self}
Li, Y., Pogodin, R., Sutherland, D.~J., and Gretton, A.
\newblock Self-supervised learning with kernel dependence maximization.
\newblock \emph{Advances in Neural Information Processing Systems (NeurIPS)},
  2021.

\bibitem[Li et~al.(2019)Li, Tang, Li, and He]{li2019learning}
Li, Z., Tang, Y., Li, W., and He, Y.
\newblock Learning disentangled representation with pairwise independence.
\newblock In \emph{Proceedings of the AAAI conference on artificial
  intelligence}, 2019.

\bibitem[Melnick \& Tenenbein(1982)Melnick and
  Tenenbein]{melnick1982misspecifications}
Melnick, E.~L. and Tenenbein, A.
\newblock Misspecifications of the normal distribution.
\newblock \emph{The American Statistician}, 36\penalty0 (4):\penalty0 372--373,
  1982.

\bibitem[Micchelli et~al.(2006)Micchelli, Xu, and
  Zhang]{micchelli2006universal}
Micchelli, C.~A., Xu, Y., and Zhang, H.
\newblock Universal kernels.
\newblock \emph{Journal of Machine Learning Research (JMLR)}, 2006.

\bibitem[Misra \& Maaten(2020)Misra and Maaten]{misra2020self}
Misra, I. and Maaten, L. v.~d.
\newblock Self-supervised learning of pretext-invariant representations.
\newblock In \emph{Proceedings of the IEEE/CVF Conference on Computer Vision
  and Pattern Recognition}, pp.\  6707--6717, 2020.

\bibitem[Mooij et~al.(2009)Mooij, Janzing, Peters, and
  Sch{\"o}lkopf]{mooij2009regression}
Mooij, J., Janzing, D., Peters, J., and Sch{\"o}lkopf, B.
\newblock Regression by dependence minimization and its application to causal
  inference in additive noise models.
\newblock In \emph{International Conference on Machine Learning (ICML)}, 2009.

\bibitem[Mourier(1953)]{mourier1953elements}
Mourier, E.
\newblock El{\'e}ments al{\'e}atoires dans un espace de banach.
\newblock In \emph{Annales de l'institut Henri Poincar{\'e}}, volume 13(3),
  pp.\  161--244, 1953.

\bibitem[Pfister et~al.(2018)Pfister, B{\"u}hlmann, Sch{\"o}lkopf, and
  Peters]{pfister2018kernel}
Pfister, N., B{\"u}hlmann, P., Sch{\"o}lkopf, B., and Peters, J.
\newblock Kernel-based tests for joint independence.
\newblock \emph{Journal of the Royal Statistical Society: Series B (Statistical
  Methodology)}, 80\penalty0 (1):\penalty0 5--31, 2018.

\bibitem[Rahimi \& Recht(2007)Rahimi and Recht]{rahimi2007random}
Rahimi, A. and Recht, B.
\newblock Random features for large-scale kernel machines.
\newblock \emph{Advances in Neural Information Processing Systems (NeurIPS)},
  2007.

\bibitem[R{\'e}nyi(1959)]{renyi1959measures}
R{\'e}nyi, A.
\newblock On measures of dependence.
\newblock \emph{Acta mathematica hungarica}, 10\penalty0 (3-4):\penalty0
  441--451, 1959.

\bibitem[Schmidhuber(1992)]{schmidhuber1992learning}
Schmidhuber, J.
\newblock Learning factorial codes by predictability minimization.
\newblock \emph{Neural computation}, 4\penalty0 (6):\penalty0 863--879, 1992.

\bibitem[Smola et~al.(2007)Smola, Gretton, Song, and
  Sch{\"o}lkopf]{smola2007hilbert}
Smola, A., Gretton, A., Song, L., and Sch{\"o}lkopf, B.
\newblock A hilbert space embedding for distributions.
\newblock In \emph{International Conference on Algorithmic Learning Theory},
  pp.\  13--31. Springer, 2007.

\bibitem[Steinwart(2001)]{steinwart2001influence}
Steinwart, I.
\newblock On the influence of the kernel on the consistency of support vector
  machines.
\newblock \emph{Journal of Machine Learning Research (JMLR)}, 2001.

\bibitem[Sun et~al.(2018)Sun, Gilbert, and Tewari]{sun2018approximation}
Sun, Y., Gilbert, A., and Tewari, A.
\newblock On the approximation properties of random relu features.
\newblock \emph{arXiv preprint arXiv:1810.04374}, 2018.

\bibitem[Taleb \& Jutten(1999)Taleb and Jutten]{taleb1999source}
Taleb, A. and Jutten, C.
\newblock Source separation in post-nonlinear mixtures.
\newblock \emph{IEEE Transactions on signal Processing}, 47\penalty0
  (10):\penalty0 2807--2820, 1999.

\bibitem[Tian et~al.(2020)Tian, Sun, Poole, Krishnan, Schmid, and
  Isola]{tian2020makes}
Tian, Y., Sun, C., Poole, B., Krishnan, D., Schmid, C., and Isola, P.
\newblock What makes for good views for contrastive learning?
\newblock \emph{Advances in Neural Information Processing Systems (NeurIPS)},
  2020.

\bibitem[Tr{\"a}uble et~al.(2021)Tr{\"a}uble, Creager, Kilbertus, Locatello,
  Dittadi, Goyal, Sch{\"o}lkopf, and Bauer]{trauble2021disentangled}
Tr{\"a}uble, F., Creager, E., Kilbertus, N., Locatello, F., Dittadi, A., Goyal,
  A., Sch{\"o}lkopf, B., and Bauer, S.
\newblock On disentangled representations learned from correlated data.
\newblock In \emph{International Conference on Machine Learning (ICML)}, 2021.

\bibitem[Tsai et~al.(2021)Tsai, Bai, Morency, and Salakhutdinov]{tsai2021a}
Tsai, Y.-H.~H., Bai, S., Morency, L.-P., and Salakhutdinov, R.
\newblock A note on connecting barlow twins with negative-sample-free
  contrastive learning, 2021.

\bibitem[Wang \& Liu(2021)Wang and Liu]{wang2021understanding}
Wang, F. and Liu, H.
\newblock Understanding the behaviour of contrastive loss.
\newblock In \emph{Proceedings of the Conference on Computer Vision and Pattern
  Recognition (CVPR)}, 2021.

\bibitem[Wang \& Isola(2020)Wang and Isola]{wang2020understanding}
Wang, T. and Isola, P.
\newblock Understanding contrastive representation learning through alignment
  and uniformity on the hypersphere.
\newblock In \emph{International Conference on Machine Learning}, pp.\
  9929--9939. PMLR, 2020.

\bibitem[Wightman et~al.(2021)Wightman, Touvron, and
  J{\'e}gou]{wightman2021resnet}
Wightman, R., Touvron, H., and J{\'e}gou, H.
\newblock Resnet strikes back: An improved training procedure in timm.
\newblock \emph{arXiv preprint arXiv:2110.00476}, 2021.

\bibitem[Zaheer et~al.(2017)Zaheer, Kottur, Ravanbakhsh, Poczos, Salakhutdinov,
  and Smola]{zaheer2017deep}
Zaheer, M., Kottur, S., Ravanbakhsh, S., Poczos, B., Salakhutdinov, R.~R., and
  Smola, A.~J.
\newblock Deep sets.
\newblock \emph{Advances in Neural Information Processing Systems (NeurIPS)},
  2017.

\bibitem[Zbontar et~al.(2021)Zbontar, Jing, Misra, LeCun, and
  Deny]{zbontar2021barlow}
Zbontar, J., Jing, L., Misra, I., LeCun, Y., and Deny, S.
\newblock Barlow twins: Self-supervised learning via redundancy reduction.
\newblock \emph{arXiv preprint arXiv:2103.03230}, 2021.

\end{thebibliography}
\bibliographystyle{icml2023}

\newpage
\appendix
\onecolumn

\section{Further Background}
\label{app:background}

\subsection{VCReg methods}

\paragraph{Learning visual representations} has a long history in machine learning. Shortly after the advent of convolutional neural networks (CNN), it was common to train a model on a supervised task such as ImageNet before removing the classifier and use the remaining model to produce features for downstream tasks (e.g., classification, segmentation), a technique coined transfer learning. Then, strategies to learn representations without labelled dataset by enforcing invariant embeddings for different versions of the same sample emerged~\citep{misra2020self, chen2020simple}, hence the term ``joint embedding''. Joint embedding methods can be divided in two categories. Contrastive learning (e.g., SimCLR~\citep{chen2020simple}) pulls together representations of two augmented versions of the same image while pushing away the representation of this image from the representation of different images. Non-contrastive learning (e.g., DINO~\citep{caron2021emerging}) pulls together representations of two augmented versions of the same image while avoiding collapse using different techniques. The frameworks considered in this work belong to the latter category. Non-contrastive self-supervised representation learning of images has a few peculiarities:
\begin{itemize}
    \item Data augmentations are central, at least when it comes to learn from ImageNet, and must be carefully chosen.
    \item Most methods are typically used either with Vision Transformers~\citep{dosovitskiy2020image}, or with CNNs such as ResNets~\citep{he2016deep}. In this work, we focus on ResNets.
    \item Model selection is usually performed via the Top1 accuracy on the validation set of ImageNet using a linear classifier on top of the learned representation. Outside of joint embedding, methods such as Masked-Auto-Encoder~\citep{he2021masked} perform poorly in linear evaluation while delivering excellent performance when fine-tuned on the downstream task.
\end{itemize}

\paragraph{The crucial role of the projector.} It was found by~\cite{chen2020simple} that adding a MLP on top of the encoder (removed after the training) significantly improved the quality of the learned representation. For example, in 100 epochs, VICReg without projector would achieve $48\%$ validation accuracy on ImageNet instead of $68$ and SimCLR $50\%$ instead of $68\%$. Since then, a few theoretical work attempted to explain SSL with joint embeddings. \cite{jing2021understanding} study the role of linear projectors with restricted augmentations, while other works such as~\citep{huang2021towards,wang2020understanding,haochen2021provable,tian2020makes, wang2021understanding} only consider an encoder without projector in their theoretical analysis. Finally, \cite{cosentino2022toward} consider a MLP projector in a very restricted context where data augmentations are Lie group transformations.

\paragraph{BarlowTwins.}~\citet{zbontar2021barlow} propose a slightly different approach based on regularizing $\mC$, the $P \times P$ cross-correlation matrix between $\mZ_{\rm left}$ and $\mZ_{\rm right}$ by optimizing
\begin{align}
    \mathcal{L}_{\rm BT}=\sum_{k=1}^{K}((\mC)_{k,k}-1)^2+\alpha \sum_{k' \not = k}(\mC)_{k,k'}^2\;.\label{eq:BT}
\end{align}
Here, the leftmost term corresponds to regularizing the cross-correlation of the same feature in the two views to be unit, while the rightmost term regularizes the cross-correlation between pairs of different features in the two views. Importantly, $(\mC)_{i,j}$ falls back to measuring the cosine similarity between the $i^{\rm th}$ column of $\mZ_{\rm left}$ and the $j^{\rm th}$ column of $\mZ_{\rm right}$ i.e. $(\mC)_{i,j}=\frac{\langle (\mZ_{\rm left})_{.,i},(\mZ_{\rm right})_{.,j}\rangle}{\|(\mZ_{\rm left})_{.,i}\|_2\|(\mZ_{\rm right})_{.,j}\|_2}$. %
Hence, the leftmost term is also an invariance term: all features must be similar for both views. 

\paragraph{W-MSE.} \citet{ermolov2021whitening} use the following loss
\begin{align*}
    \min \;& 2 - 2 \frac{\langle \mZ_{\rm left} , \mZ_{\rm right} \rangle}{\| \mZ_{\rm left} \|_2 \dot \| \mZ_{\rm right} \|_2}\; \\
    &\text{s.t. } \Cov(\mZ_{\rm left}) = Id, \Cov(\mZ_{\rm right}) = Id.
    \label{eq:WMSE}
\end{align*}
where the cosine similarity can be replaced by the Euclidean distance.

\subsection{Independent Component Analysis}

The goal of ICA~\citep{comon1994independent} is to find a transformation of a random vector $\mY$\footnote{For simplicity, $\mY$ directly denotes $N$ empirical $D$-dimensional observations.} which minimizes the statistical dependence of its components. Its simplest instance is linear ICA, and is motivated by problems such as the cocktail-party, where $\mY \in \mathbb{R}^{N \times D}$ typically results from a linear transformation of independent sources $\mS$ (\textit{e.g} overlapping voices) one wants to recover. Formally, $\mY = \mS\mA$, with $\mA \in \mathbb{R}^{D \times D}$ an unknown mixing matrix that can therefore not be inverted. Instead, the linear ICA approach searches for $\mM \in \mathbb{R}^{D \times D}$ such that $\mY\mM=\mS$ by maximizing the statistical \textit{mutual} independence between the components of $\mY\mM$. Being able to recover $\mS$ is a property known as identifiability.

\section{Proofs}

\subsection{Proof of \cref{prop:linar_proj}: Random Linear Projectors maximize pairwise independence}

\label{app:proofs}
\begin{proof}

Let's consider the linear regime for which $g((\mX)_{:,i})=(\mX)_{:,i}\vw^T_{i}$ with orthogonal weights i.e. $\langle \vw_i,\vw_j\rangle=1_{\{i=j\}}$. This assumption is realistic since we assume that $\vw \in \mathbb{R}^{K}$ with $K$ very large, and that those are randomly sampled. From that, we then have
\begin{align*}
    \sum_{i\not = j}\HSIC((\mX)_{:,i}, (\mX)_{:,j})
    =&\frac{1}{(N-1)^2}\sum_{i\not = j}\Tr(g((\mX)_{:,i})g((\mX)_{:,i})^T\mH g((\mX)_{:,j})g((\mX)_{:,j})^T\mH)\\
    =&\frac{1}{(N-1)^2} \sum_{i\not = j}\|g((\mX)_{:,i})^T\mH g((\mX)_{:,j}) \|_F^2,
\end{align*}
we will now push the sum inside the norm by considering the following equality:
\begin{align*}
    \| \sum_{i\not = j}f(i,j)\|_F^2=&\sum_{i\not = j}\| f(i,j)\|_F^2 + \sum_{i\not = j}\sum_{k \not = \ell ,(i,j)\not = (k,\ell)} \Tr\left(f(i,j)^Tf(k,\ell)\right)=\sum_{i\not = j}\| f(i,j)\|_F^2,
\end{align*}
since in our case
\begin{align*}
  f(i,j)&=g((\mX)_{:,i})^T\mH g((\mX)_{:,j})=\vw_{i}(\mX)_{:,i}^T\mH(\mX)_{:,j}\vw_{j}^T,\\
  f(i,j)^Tf(k,\ell)&=
  \vw_{j}(\mX)_{:,j}^T\mH(\mX)_{:,i}\vw_{i}^T
  \vw_{k}(\mX)_{:,k}^T\mH(\mX)_{:,\ell}\vw_{\ell}^T
  ,\\
  \Tr(f(i,j)^Tf(k,\ell))&=1_{\{i = k \wedge j = \ell\}}(\mX)_{:,j}^T\mH(\mX)_{:,i}(\mX)_{:,k}^T\mH(\mX)_{:,\ell},
\end{align*}
leading to
\begin{align*}
    \sum_{i\not = j}\HSIC((\mX)_{:,i}, (\mX)_{:,j})
    =&\frac{1}{(N-1)^2}\|\sum_{i\not = j} g((\mX)_{:,i})^T\mH g((\mX)_{:,j}) \|_F^2\\
    &\hspace{-2cm}=\frac{1}{(N-1)^2}\|(\sum_{i} g((\mX)_{:,i}))^T\mH (\sum_{j}g((\mX)_{:,j}))-\sum_{i}g((\mX)_{:,i})^T\mH g((\mX)_{:,i}) \|_F^2\\
    = &\|\frac{1}{N-1}(\sum_{i} g((\mX)_{:,i}))^T\mH (\sum_{j}g((\mX)_{:,j}))-\diag(\Cov(\mX\mW)) \|_F^2\\
    = &\|\Cov(\mX\mW)-\diag(\Cov(\mX\mW)) \|_F^2\\
    =&\sum_{i \not = j}\Cov(\mX\mW)_{i,j}^2,
\end{align*}
since we assume that all columns of $\mX$ have same variance and that $\mW$ is orthogonal we have for the pre-last equality
\begin{align*}
    \frac{1}{N-1}\sum_{i}g((\mX)_{:,i})^T\mH g((\mX)_{:,i})=\Var((\mX)_{:,1})\sum_{i}\vw_i\vw_i^T
    =\Var((\mX)_{:,1}),
\end{align*}
and for the last equality we use the fact that since $g$ is linear, $\sum_{i} g((\mX)_{:,i})=\sum_{i}(\mX)_{:,i}\vw_i^T=\mX\mW$ with $\mW=[\vw_1,\dots,\vw_K]^T$.

\end{proof}

\subsection{On the equivalence between VICReg and BarlowTwins objectives}
\label{proof:BT}

We can express Barlow Twins objective as:
\begin{align*}
    \min &\sum_{k=1}^{K}(\Cov(\mZ_{\rm left}, \mZ_{\rm right})_{k,k}-1)^2+\alpha \sum_{k' \not = k}\Cov(\mZ_{\rm left}, \mZ_{\rm right})_{k,k'}^2\; \\
    &\text{s.t. } \Cov(\mZ_{\rm left}) = \mI, \Cov(\mZ_{\rm right}) = \mI.
\end{align*}
Assuming $\mZ_{\rm left}^T\mZ_{\rm left}=\mI,\mZ_{\rm right}^T\mZ_{\rm right}=\mI$ i.e. perfect minimization of the variance and covariance terms, we have
\begin{align*}
    C_{i,j}=&\frac{\langle (\mZ_{\rm left})_{:,i},(\mZ_{\rm left})_{:,j}\rangle}{\|(\mZ_{\rm left})_{:,i} \|_2 \|(\mZ_{\rm left})_{:,j} \|_2}=\frac{1}{2}\langle (\mZ_{\rm left})_{:,i},(\mZ_{\rm right})_{:,j}\rangle=-\| (\mZ_{\rm left})_{:,i} - (\mZ_{\rm right})_{:,j}\|_2^2-1,
\end{align*}
and thus
\begin{align*}
    \sum_{i}(C_{i,i}-1)^2=\sum_i\| (\mZ_{\rm left})_{:,i} - (\mZ_{\rm right})_{:,i}\|_2^4=\|\mZ_{\rm left} - \mZ_{\rm right}\|_F^4\propto I(\mZ_{\rm left},\mZ_{\rm right}),
\end{align*}
so we recover the invariance loss exactly with the diagonal terms of BarlowTwins. We now show this is actually enough to minimize this quantity to also minimize the off-diagonal terms: %
\begin{align*}
    \sum_{i\not = j}C_{i,j}^2=&\sum_{i\not = j}(1-\frac{1}{2}\| (\mZ_{\rm left})_{:,i}-(\mZ_{\rm right})_{:,j}\|_2^2)^2\\
    =&\sum_{i\not = j}(1-\frac{1}{2}\| (\mZ_{\rm left})_{:,i}-(\mZ_{\rm left})_{:,j}+(\mZ_{\rm left})_{:,j}-(\mZ_{\rm right})_{:,j}\|_2^2)^2\\
    =&\sum_{i\not = j}\Big(1-\frac{1}{2}\| (\mZ_{\rm left})_{:,i}-(\mZ_{\rm left})_{:,j}\|_2^2-\frac{1}{2}\| (\mZ_{\rm left})_{:,j}-(\mZ_{\rm right})_{:,j}\|_2^2\\
    &-\langle (\mZ_{\rm left})_{:,i}-(\mZ_{\rm left})_{:,j},(\mZ_{\rm left})_{:,j}-(\mZ_{\rm right})_{:,j}\rangle \Big)^2\\
    =&\sum_{i\not = j}\Big(-\frac{1}{2}\| (\mZ_{\rm left})_{:,j}-(\mZ_{\rm right})_{:,j}\|_2^2-\langle (\mZ_{\rm left})_{:,i}-(\mZ_{\rm left})_{:,j},(\mZ_{\rm left})_{:,j}-(\mZ_{\rm right})_{:,j}\rangle \Big)^2\\
    =&\sum_{i\not = j}\Big(\frac{1}{2}\| (\mZ_{\rm left})_{:,j}-(\mZ_{\rm right})_{:,j}\|_2^2+\langle (\mZ_{\rm left})_{:,i}-(\mZ_{\rm left})_{:,j},(\mZ_{\rm left})_{:,j}-(\mZ_{\rm right})_{:,j}\rangle \Big)^2\\
    \leq&\sum_{i\not = j}\Big(\frac{1}{2}\| (\mZ_{\rm left})_{:,j}-(\mZ_{\rm right})_{:,j}\|_2^2+\| (\mZ_{\rm left})_{:,i}-(\mZ_{\rm left})_{:,j}\|_2^2 \|(\mZ_{\rm left})_{:,j}-(\mZ_{\rm right})_{:,j}\|_2^2 \Big)^2\\
    = &\frac{25(K-1)}{4}\| \mZ_{\rm left}-\mZ_{\rm right}\|_2^4,
\end{align*}
hence minimizing the BarlowTwins loss with the explicit whitening constraint is equivalent to minimizing VICReg with explicit whitening constraint.

\clearpage
\section{Additional experimental results}
\label{app:add_exps}

\subsection{Illustrating Theorem~\ref{cor:composition}}

\begin{figure}[h!]
    \centering
    \includegraphics[scale=.45]{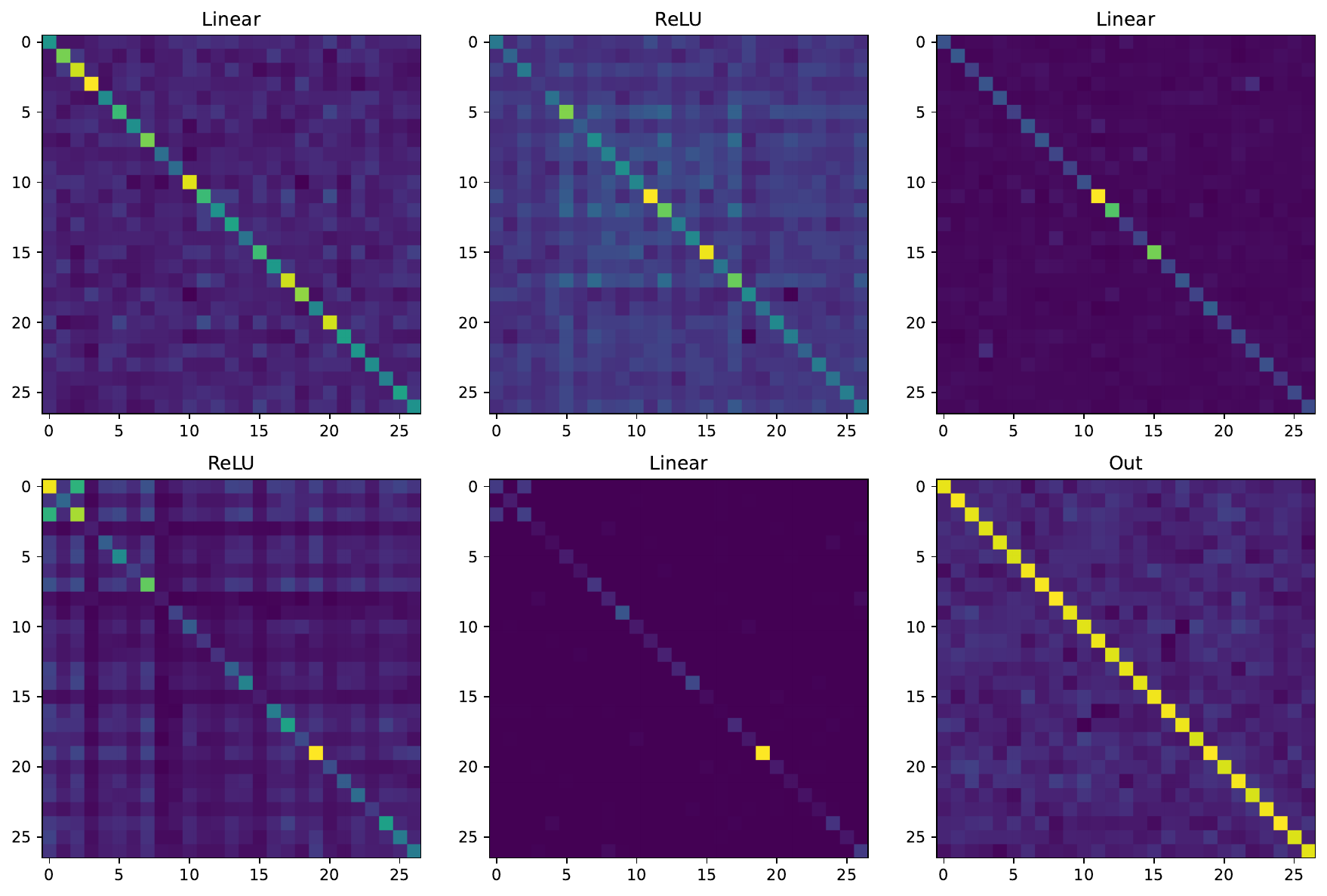}
    \caption{The covariance matrices for some features of each activation in a learned VICReg projector with width $8192$ before corresponding hidden layer are close to diagonal, suggesting that each hidden layer output is implicitly VC-regularized.
    }
    \label{fig:implicit_reg}
\end{figure}
\begin{figure}[h!]
    \centering
    \includegraphics[scale=.45]{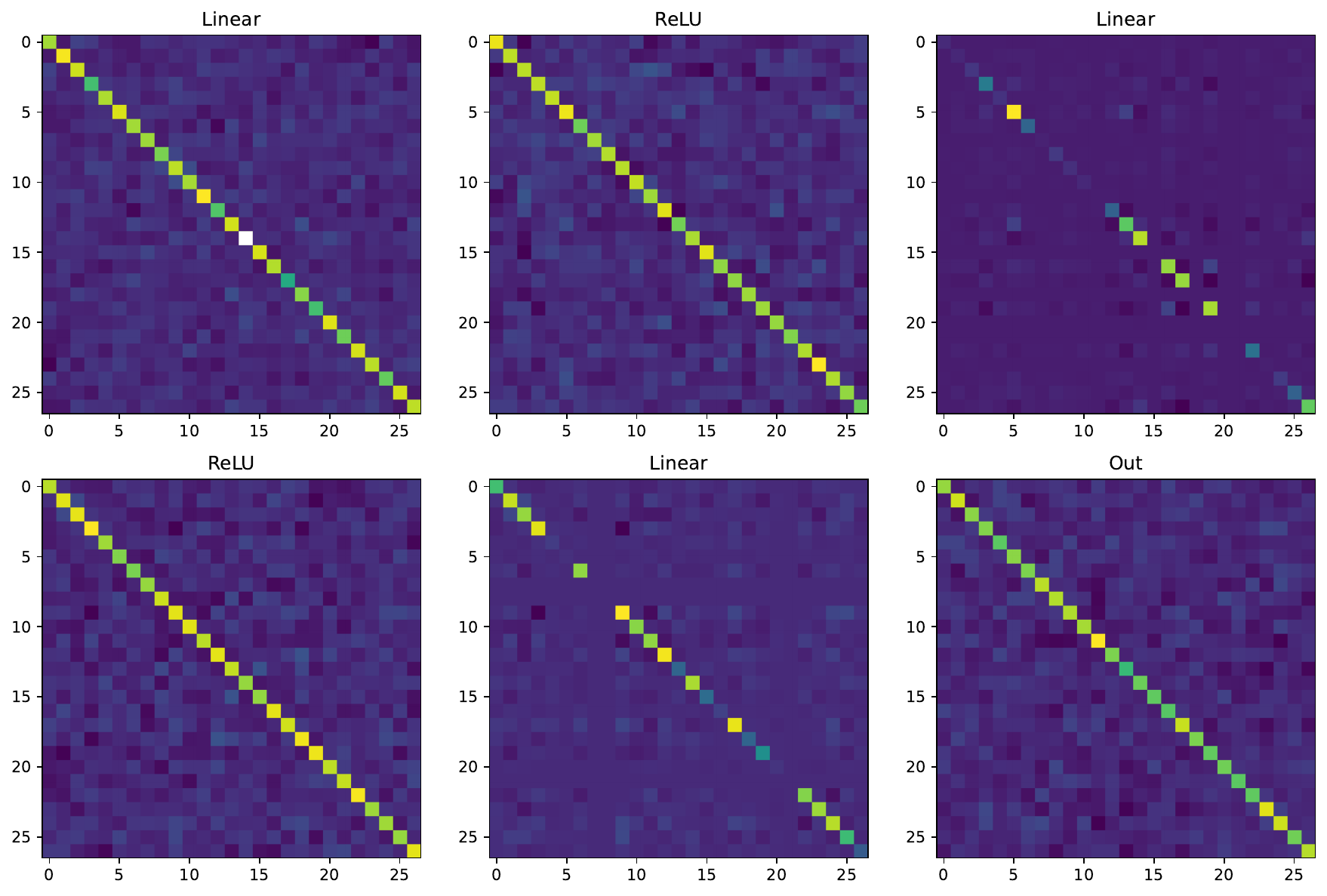}
    \caption{The covariance matrices for some features of each activation in a random VICReg projector (with width $8192$) before corresponding hidden layer are close to diagonal, suggesting that each hidden layer output is implicitly VC-regularized.
    }
    \label{fig:implicit_reg_random}
\end{figure}

\newpage

\subsection{Other factors of variation for HSIC}
\label{app:more_hsic}

In this subsection, we illustrate the variation of HSIC induced by two more factors: varying the depth of the projector (~\cref{fig:hsic_vs_depth}), and the covariance coefficient in VCReg (~\cref{fig:cov_coeff_hsic}).

\begin{figure}[h]
    \centering
    \includegraphics[width=.65\columnwidth]{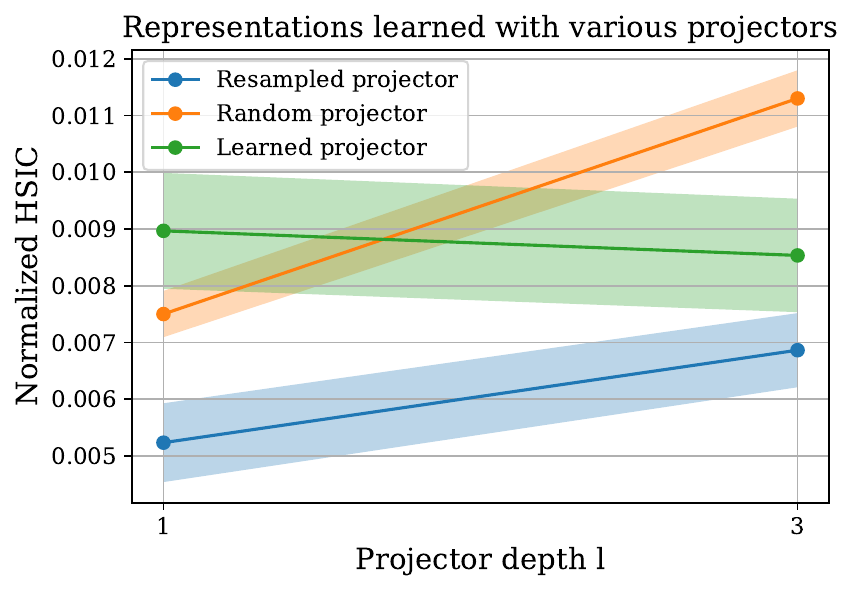}
    \caption{Increasing depth hurts HSIC for random projectors. Resampled projectors yield the representations with the lowest dependence amount.}
    \label{fig:hsic_vs_depth}
\end{figure}

\begin{table}[h]
    \small
    \centering
    \caption{VICReg with a 512-512-512 projector and varying covariance coefficient. Increasing this coefficient decreases HSIC. Best accuracy does not always correspond to lowest Normalized HSIC: optimizing too much for Normalized HSIC is detrimental to the information content of the representation (it is trivial to train a representation too look pairwise independent. Doing so while extracting information from the input is harder).}
    \begin{tabular}{lccccc} \toprule
    {Covariance coefficient in VCReg} & 1 & 2 & 4 & 8 & 16 \\
    \midrule
    {Normalized HSIC ($\downarrow$)} & 0.017 & 0.0119 & 0.0090 & 0.0084 & 0.0084 \\
    \midrule
    {Top1 ImageNet ($\uparrow$)} & 60.8 & 62.5 & 63.5 & 63.1 & 62.4\\
    \bottomrule
    \end{tabular}
\label{fig:cov_coeff_hsic}
\end{table}

\clearpage
\section{Implementations}
\label{app:implem}

\subsection{HSIC~\cref{eq:HSIC}}
\begin{lstlisting}[language=Python,escapechar=\%]
def GaussianKernelMatrix(X, sigma):
    pairwise_distances = torch.cdist(X, X)
    return torch.exp( -pairwise_distances / (2 * sigma**2))

def HSIC(X_1, X_2, sigma_1, sigma_2):
    N = x.size(0) # batch size, should be the same for X_1 and X_2
    K_1 = GaussianKernelMatrix(X_1, sigma_1) # Gaussian kernel matrix of X_1 with bandwidth sigma_1
    K_2 = GaussianKernelMatrix(X_2, sigma_2) # Gaussian kernel matrix of X_2 with bandwidth sigma_2
    H = torch.eye(N) - 1.0 / N  # centering matrix
    HSIC = torch.trace(K_1 @ H @ K_2 @ H)/((N-1)**2)
    return HSIC

\end{lstlisting}

In our study, the bandwidth $\sigma$ of the Gaussian kernel is determined by the median of the distribution of pairwise euclidean distances between samples.

\subsection{dHSIC~\cref{eq:dHSIC_kernel}}
dHSIC can be estimated given empirical samples $\mX_1,\dots, \mX_d$ with respective kernel matrices $\mK_1,\dots, \mK_d$  as:
\begin{align*}
    \dHSIC(\mX_1,\dots, \mX_d) =& \frac{1}{N^2} \sum_{i, j} (\bigodot_{k=1}^d \mK_k)_{i,j} + \frac{1}{N^{2d}} \prod_{k=1}^d \sum_{i,j} (\mK_k)_{i,j} - \frac{2}{N^{d + 1}} \sum_{i} \bigodot_{k=1}^d \sum_j (\mK_k)_{i,j}.
\end{align*}
\label{eq:dHSIC}
When $d=2$, the first term corresponds to a biased $\HSIC$ estimator. The implementation of dHSIC is given by:
\vspace{.5cm}
\begin{lstlisting}[language=Python,escapechar=\%]
import GaussianKernelMatrix # defined above

def dHSIC(X, sigma):
    length = X.shape(0)
    term_1 = 1.0
    term_2 = 1.0
    term_3 = 2.0 / length
    for j in range(D):
        K_j = GaussianKernelMatrix(X[j])
        term_1 = torch.mul(term_1, K_j)
        term_2 = 1.0 / length / length / term_2 * torch.sum(K_j)
        term_3 = 1.0 / length * term_3 * K_j.sum(axis=0)
    
    term_1 = (1.0 / length) ** 2 * torch.sum(term_1)
    term_3 = torch.sum(term_3)

    return term_1 + term_2 - term_3 # the three terms of the estimator
    
\end{lstlisting}

As these evaluations already take a few minutes, scaling it to significantly larger portions of components would be impractical.

\subsection{Linear ICA model~\ref{fig:archi_linearica}}
\begin{lstlisting}[language=Python,escapechar=\%]
for p in projector.parameters(): # freeze the projector
        p.requires_grad = False
        
for y in loader:
    x = encoder(y) # matrix multiplication with M
    
    z = projector(x) # compute embedding
    
    C = torch.cov(z) # covariance matrix
    loss = torch.MSE(C, torch.identity()) # VCReg loss
    loss.backward()
    optimizer.step()
    
    projector.__init__()  # to resample the projector
    for p in projector.parameters(): # freeze the projector
        p.requires_grad = False
\end{lstlisting}

\subsection{Nonlinear ICA model~\ref{fig:archi_pnlica}}
The implementation of our nonlinear ICA model~\ref{fig:archi_pnlica} differs from~\ref{fig:archi_linearica} by the addition of a reconstruction constraint:
\vspace{.5cm}
\begin{lstlisting}[language=Python,escapechar=\%]
for p in projector.parameters(): # freeze the projector
        p.requires_grad = False
        
for y in loader:
    x = encoder(y) # MLP encoder
    
    z = projector(x) # compute embedding
    
    y_rec = reconstructor(x) # MLP reconstructor
    
    C = torch.cov(z) # covariance matrix
    loss = torch.MSE(C, torch.identity()) + lmda * torch.MSE(y, y_rec) # VCReg loss with reconstruction
    loss.backward()
    optimizer.step()
    
    projector.__init__()  # to resample the projector
    for p in projector.parameters(): # freeze the projector
        p.requires_grad = False
\end{lstlisting}

\clearpage
\section{Experimental details}
\label{app:exp_details}

\subsection{Details on ImageNet experiments}

\paragraph{Architectures and hyper-parameters.} We use a Resnet50 bottleneck, optimized during $100$ epochs with LARS and an initial learning rate of $0.3$ for all methods. The batch size is $1024$ for all methods except SimCLR. All methods have projector $8192-8192-8192$ except DINO and Supervised. Throughout the training, we evaluate the learned representation using an online linear classifier. Details specific to each method:
\begin{itemize}
    \item Barlow Twins: for Experiment~\ref{exp:analysis}, the off-diagonal coefficient is $0.0051$.
    \item VICReg: for Experiment~\ref{exp:analysis}, the projector size is $8192-8192-8192$, Invariance coeff is $25$, Variance coeff is $25$ and is Covariance coeff $1$. For the rest of the experiments, and following~\cite{bardes2021vicreg}, we only move the covariance coefficient when changing the size of the projector. We scale it in the square root of the output size of the projector in order to have similar magnitude for the covariance term for all projector sizes.
    \item SimCLR: for Experiment~\ref{exp:analysis}, we choose a higher batch size of $2048$ as SimCLR is sensitive to this parameter~\citep{chen2020simple}. The temperature is $0.15$.
    \item DINO: we use $8$ crops. The head has $4$ layers of widths $2048-2048-2048-256-65536$.
\end{itemize}

\paragraph{Augmentations.} At train time, the resolution is $160$ and we use
\begin{itemize}
    \item $\texttt{RandomHorizontalFlip()}$.
    \item $\texttt{ColorJitter(0.8, 0.4, 0.4, 0.2, 0.1)}$.
    \item $\texttt{Greyscale(0.2)}$.
    \item $\texttt{NormalizeImage}$ with ImageNet mean and standard deviation.
    \item $\texttt{GaussianBlur()}$ with kernel size $(5, 9)$ and sigma $(0.1, 2)$.
\end{itemize}
At validation time, the resolution is $224$ and the images are cropped and normalized with ImageNet mean and standard deviation.
        
\subsection{ICA setup}

\paragraph{Detailed setup.} In these experiments, we keep the same optimizer, train on $100$ epochs and choose a batch size of $64$ for both datasets. We tune the learning rate according to a logarithmic grid $[1.0, 10.0, 100.0]$ (then rescaled according to $\frac{lr \times batch size}{256}$). The MLP can either be a SSL-like projector or simply a fully-connected layer followed by a ReLU. The wider the MLP, the better the result hence it does not require selection. For the synthetic dataset, the MLP has width $1024$, and $8192$ for the audio one.
\begin{itemize}
    \item Linear ICA: We tune the standard and covariance coefficients according to a logarithmic grid $[1, 10, 100]$.
    \item PNL ICA: Our architecture for the nonlinear ICA experiments is presented in Figure~\ref{fig:archi_pnlica}. The encoder $f$ is a MLP with $3$ layers and width $128$. The decoder $h$ is a learnable nonlinearity (a MLP with $3$ hidden layers and depth $16$) followed by a fully-connected layer. We tune the standard, covariance and reconstruction coefficients according to a larger logarithmic grid $[1, 3, 10, 30, 100]$.
\end{itemize}

\begin{figure}
    \centering
    \includegraphics[scale=.4, trim={16cm .9cm 0 1cm},clip]{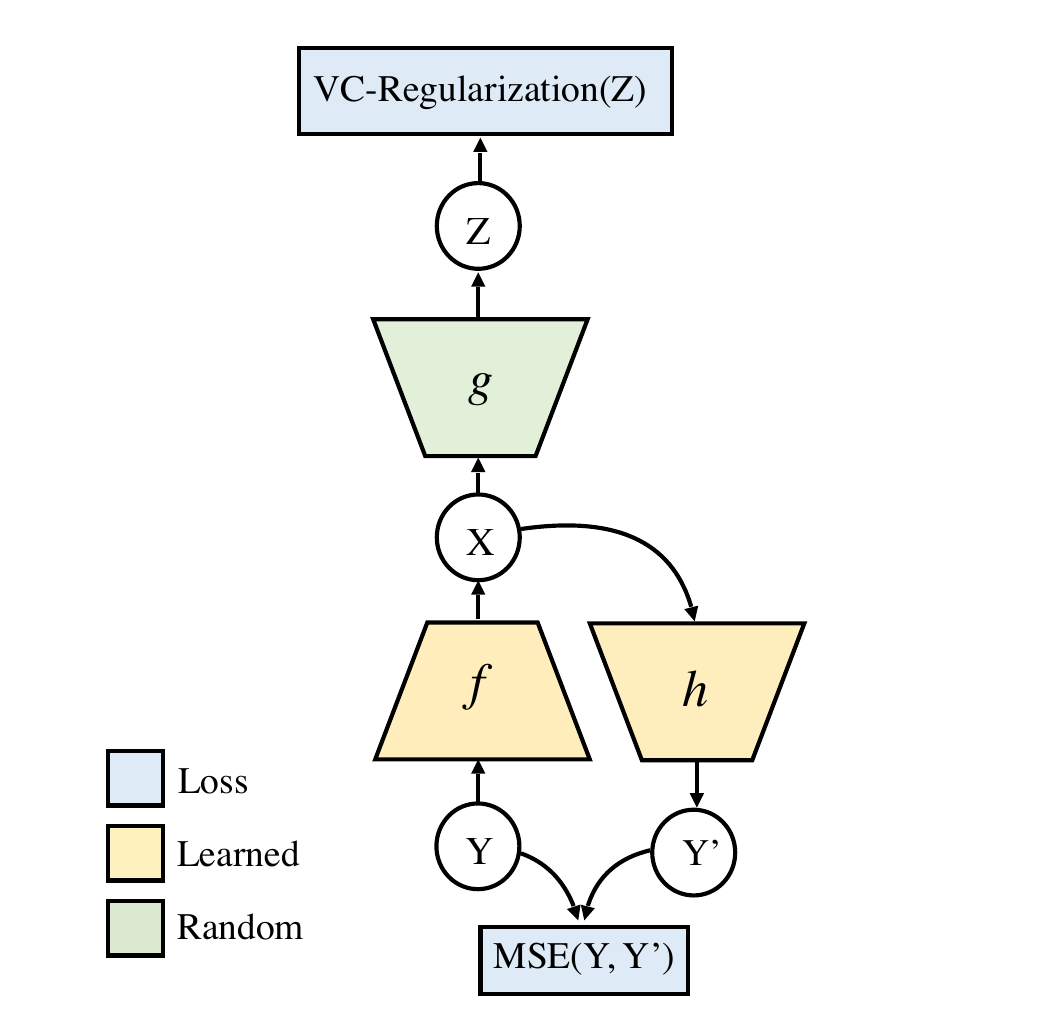}
    \caption{Nonlinear ICA model.}
    \label{fig:archi_pnlica}
\end{figure}

\paragraph{Reconstructed sources for ICA.} In this paragraph, we provide ground truth sources for the synthetic data along with examples of reconstructed sources. 

\begin{figure}
    \centering
    \includegraphics[scale=.7]{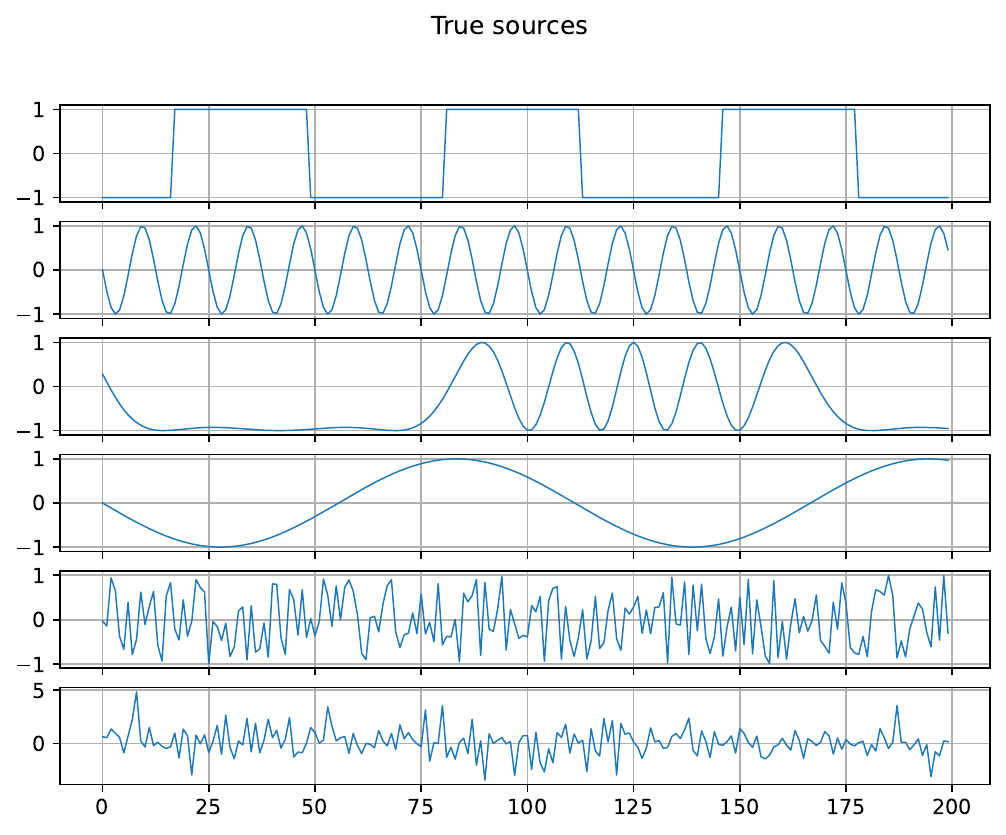}
    \label{fig:true_sources}
    \caption{Data before mixing.}
\end{figure}
\begin{figure}
    \centering
    \includegraphics[scale=.7]{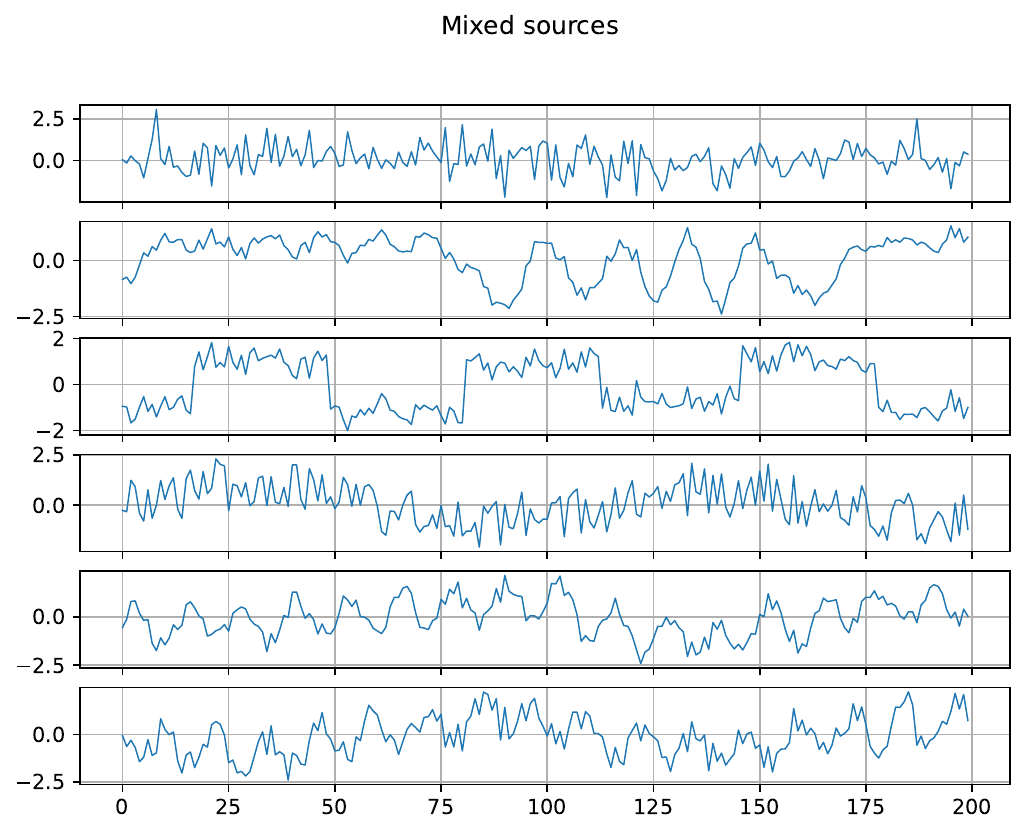}
    \label{fig:mixed_sources}
    \caption{Data after mixing, before feeding to the various ICA models.}
\end{figure}
\begin{figure}
    \centering
    \includegraphics[scale=.7]{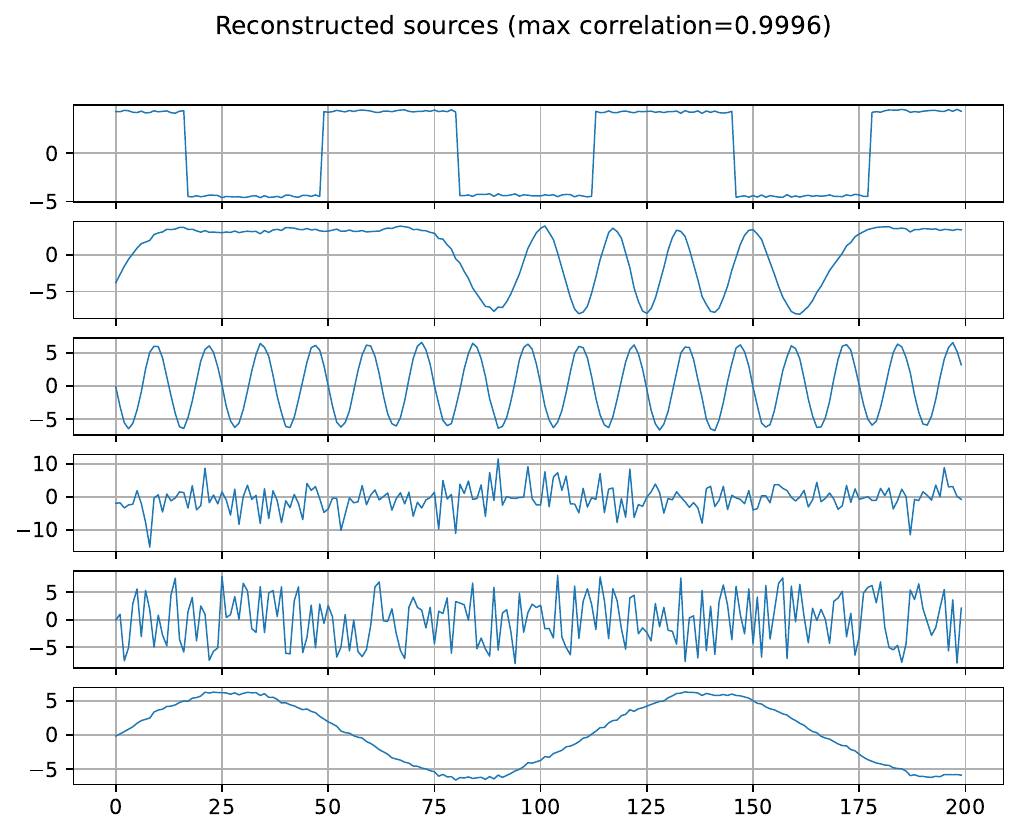}
    \caption{This level of max correlation is typically achieved by FastICA, VCReg or Anica.}
    \label{fig:rec_sources_ours}
\end{figure}
\begin{figure}
    \centering
    \includegraphics[scale=.7]{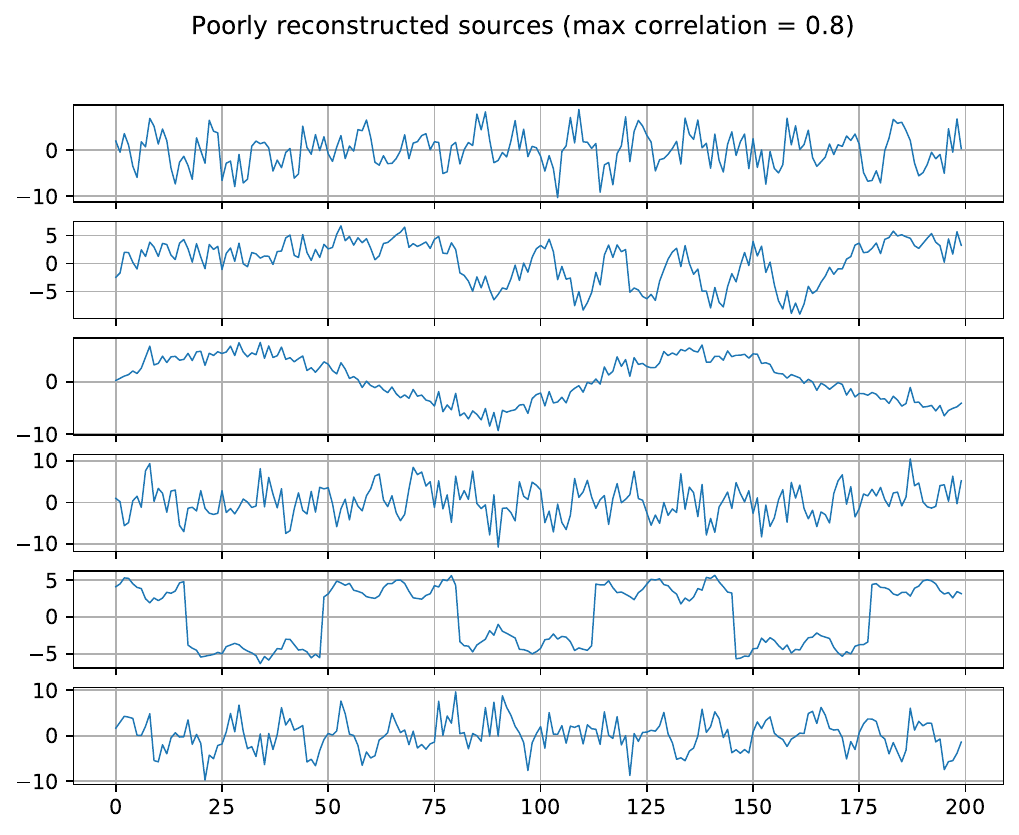}
    \caption{This level of max correlation is typically achieved by Whitening.}
    \label{fig:rec_sources_whitening}
\end{figure}

\end{document}